\documentclass[journal,comsoc]{IEEEtran}

\usepackage{amsmath,amssymb,amsbsy,amsthm,mathrsfs,bbm}
\usepackage{mathtools} 
\usepackage{graphicx,epsfig}
\usepackage{multirow,array,tabularx,makecell}
\usepackage{color}
\usepackage{cite}
\usepackage{caption}
\allowdisplaybreaks
\usepackage{subcaption}
\usepackage{algorithm}         
\usepackage{algpseudocode}     
\newcommand{\st}{{\mathrm{s.t.}}}
\DeclareMathAlphabet{\mathpzc}{OT1}{pzc}{m}{it}

\newtheorem{theorem}{\textbf{\textsc{Theorem}}}

\begin{document}
\title{Deep Learning-Driven Friendly Jamming for Secure Multicarrier ISAC Under Channel Uncertainty}
\author{
\IEEEauthorblockN{Bui Minh Tuan, Van-Dinh Nguyen, Diep N. Nguyen, Nguyen Linh Trung, \\   Nguyen Van Huynh, Dinh Thai Hoang, Marwan Krunz,  and Eryk Dutkiewicz}

\thanks{The work of V.-D. Nguyen was supported by the Australia-Vietnam Strategic Technologies Centre (AVSTC) and VUNI.GREEN-X.RISE.AY25-27.03.}
\thanks{Bui Minh Tuan, Diep N. Nguyen, Dinh Thai Hoang, and Eryk Dutkiewicz are with the School of Electrical and Data Engineering, University of Technology Sydney, NSW 2007, Australia (bui.m.tuan@student.uts.edu.au; \{diep.nguyen, hoang.dinh, eryk.dutkiewicz\} @uts.edu.au).}

\thanks{V.-D. Nguyen is with the School of Computer Science and Statistics, Trinity College Dublin, Dublin 2, D02PN40, Ireland  (e-mail: dinh.nguyen@tcd.ie), and was with Smart Green Transformation Center (GREEN-X), VinUniversity, Hanoi, Vietnam. V.-D. Nguyen carried
out his contribution while affiliated with VinUniversity.}
\thanks{Nguyen Van Huynh is with the Department of Electrical Engineering and
Electronics, University of Liverpool, Liverpool, L69 3GJ, United Kingdom
(e-mail: huynh.nguyen@liverpool.ac.uk).}
\thanks{Marwan Krunz is with the Department of Electrical and Computer Engineering, the University of Arizona, Tucson, AZ, USA (krunz@arizona.edu).}
\thanks{Nguyen Linh Trung is with the Faculty of Electronics and Telecommunications, University of Engineering and Technology, Vietnam National University, Hanoi, Vietnam (linhtrung@vnu.edu.vn).}
\vspace{-15pt}
}

\maketitle

\begin{abstract}
Integrated sensing and communication (ISAC) systems promise efficient spectrum utilization by jointly supporting radar sensing and wireless communication. This paper presents a deep learning-driven framework for enhancing physical-layer security in multicarrier ISAC systems under imperfect channel state information (CSI) and in the presence of unknown eavesdropper (Eve) locations. Unlike conventional ISAC-based friendly jamming (FJ) approaches that require Eve's CSI or precise angle-of-arrival (AoA) estimates, our method exploits radar echo feedback to guide directional jamming without explicit Eve's information. To enhance robustness to radar sensing uncertainty, we propose a radar-aware neural network that jointly optimizes beamforming and jamming by integrating a novel nonparametric Fisher Information Matrix (FIM) estimator based on f-divergence. The jamming design satisfies the Cramér–Rao lower bound (CRLB) constraints even in the presence of noisy AoA. For efficient implementation, we introduce a quantized tensor train-based encoder that reduces the model size by more than 100 times with negligible performance loss. We also integrate a non-overlapping secure scheme into the proposed framework, in which specific sub-bands can be dedicated solely to communication. {\color{black}
Extensive simulations demonstrate that the proposed solution achieves significant improvements in secrecy rate, reduced block error rate (BLER), and strong robustness against CSI uncertainty and angular estimation errors, underscoring the effectiveness of the proposed deep learning--driven friendly jamming framework under practical ISAC impairments.
}
\end{abstract}
\begin{IEEEkeywords}
Eavesdropping, integrated sensing and communication, friendly jamming, wiretap channel, mutual information, physical layer security, imperfect channel state information.
\end{IEEEkeywords}
	
\section{Introduction}
Integrated sensing and communication (ISAC) is a promising technology for next-generation wireless systems, enabling base stations (BSs) to transmit data and sense the environment with a unified waveform~\cite{liu2020joint, nguyen2025performance, su2023sensing, nasir2024joint, zhang2024secure, jiang2023physical, su2020secure, xu2022robust}. By sharing spectrum and hardware, ISAC significantly improves efficiency and supports advanced applications such as autonomous vehicles, unmanned aerial systems, and intelligent infrastructure. However, ISAC is highly vulnerable to eavesdropping. Unlike conventional systems, its signals contain both user data and environmental information, allowing adversaries to simultaneously intercept private communications and sensing insights~\cite{wei2022toward}. This dual exposure amplifies security risks and demands robust protection mechanisms.

{\color{black}
\subsection{Related Work}
Physical-layer security (PLS) techniques such as friendly jamming (FJ) have been widely studied \cite{su2022secure,jiang2023physical}. Yet most assume perfect CSI or sensing information, which is unrealistic given the passive nature of eavesdroppers (Eves). Moreover, many works optimize secrecy rate alone while overlooking practical metrics such as block error rate (BLER) and the complexity of multiuser multicarrier systems. In this work, we exploit ISAC's sensing capability to estimate Eve's location and guide FJ design, thereby degrading Eve's reception. The probing signal combines structured orthogonal frequency-division multiplexing (OFDM) pilots, supporting both channel estimation and radar sensing, with a jamming signal that lowers Eve's signal-to-noise ratio (SNR). This introduces dual uncertainties from angle-of-arrival (AoA) estimation errors and imperfect CSI, which complicate the joint design of secure transmission and FJ. To our knowledge, this dual uncertainty has not been addressed in prior ISAC security studies. To overcome these challenges, we propose a deep learning framework that operates without perfect CSI. Furthermore, we introduce a quantized tensor train (TT-Q) decomposition model to reduce complexity, enabling a lightweight and scalable security solution for ISAC systems.
}

Various techniques have been proposed to strengthen PLS in ISAC systems. Early work on semantic security extends beyond conventional secrecy metrics by addressing higher-level confidentiality requirements \cite{yang2024secure}, but such methods are mostly limited to narrowband settings and scale poorly to multicarrier ISAC. Sensing-assisted approaches \cite{su2023sensing,nasir2024joint} use radar-derived angle estimates to localize Eves. Although promising, they depend on highly accurate AoA estimation, which is unreliable in mobile, obstructed, or fading environments.

Convex optimization-based techniques provide theoretical guarantees using tools such as semidefinite relaxation (SDR), alternating optimization, and fractional programming \cite{liu2024artificial, bazzi2023secure}. However, they typically assume perfect CSI, and their performance deteriorates under real-time constraints or dynamic propagation. Even advanced formulations that incorporate secrecy rate (SR), signal-to-interference-plus-noise ratio, or mean squared error \cite{zhang2024secure, jiang2023physical, su2020secure, xu2022robust} generally rely on bounded or perfect Eve's CSI, which limits their practicality. To address channel uncertainty, robust beamforming schemes have been proposed under bounded or statistical CSI errors \cite{hou2024optimal, ren2023robust, liu2022outage}. These methods employ tools such as the S-procedure, Schur complement, and successive convex approximation (SCA), improving resilience to legitimate user CSI imperfections. Nevertheless, their performance still degrades in high-mobility or multiuser scenarios.

Another active line of research is sensing-assisted secure beamforming, where radar-derived angular information is used to partition spatial regions and infer Eve's risk zones \cite{ren2023robust,liu2024artificial,bazzi2023secure}.  The region partitioning and adaptive secure beamforming \cite{cao2024sensing, su2023sensing, nasir2024joint} integrated sensing with optimization tools like polyblock approximation and fixed-point iteration. While these approaches mitigate the absence of Eve's CSI, they often assume static environments and highly accurate sensing, limiting their effectiveness in dynamic or noisy conditions.

Following this, other studies explored joint waveform and beamforming design to balance secrecy and sensing objectives. For example, \cite{deligiannis2018secrecy, yang2023secure} jointly optimized secrecy rate, mutual information, and estimation accuracy, such as Cramer-Rao lower bound CRLB, under power and sensing constraints using fractional programming and dual decomposition. Similarly, secure waveform designs in \cite{su2022secure, gunlu2023secure} aimed to suppress eavesdropping while maintaining radar performance. However, these methods still relied on perfect CSI and computationally intensive algorithms, which hinder scalability and real-time deployment. Emerging architectures, such as full-duplex ISAC and secrecy-distortion frameworks, added new opportunities but also increased system complexity. For instance, \cite{liu2024artificial, bazzi2023secure} investigated robust beamforming for full-duplex multi-input and multi-output (MIMO) under CSI uncertainty, while \cite{zou2024securing} proposed a distortion-aware secrecy model with output feedback. These works broaden the design space but still struggle to balance CSI uncertainty with practical complexity.

More recently, learning-based approaches have begun to reshape secure ISAC. Deep neural networks (DNNs) have been applied to generate secure precoders under imperfect CSI \cite{li2024deep}, and predictive beamforming has been explored for mobile aerial threats \cite{al2025predictive}. While promising, these methods often require large models, lack online adaptability, and remain difficult to deploy on 
resource-constrained platforms.
In many learning-based designs, heterogeneous inputs such as OFDM waveforms, CSI, and AoA estimates are embedded into high-dimensional vectors, particularly when processed by fully connected (FC) architectures for precoding or FJ design. However, such models face scalability challenges in multiuser MIMO scenarios, where dimensionality increases with antennas, subcarriers, and time samples. On the other hand, FC networks are prone to overfitting and impose high computational and memory costs, making them unsuitable for real-time or edge applications.

{\color{black}
Tensor train decomposition, originally introduced as an efficient representation for high dimensional tensors \cite{oseledets2011tensor}, has been widely adopted for neural network compression by factorizing large layers into compact tensor cores \cite{novikov2015tensorizing}. To further reduce memory footprint and inference complexity, quantized tensor representations have been proposed, where tensor train cores are stored using low bit precision or shared codebooks. Recent works such as QTTNet \cite{sun2020deep} demonstrate that quantized tensor train networks achieve substantial model compression with minimal performance degradation. Related studies have also explored canonical polyadic decomposition and tensor quantization for efficient learning and joint communication sensing parameter estimation in ISAC systems \cite{chang2022tensor, cheng2023nested, zhang2024integrated, du2024dl}. These advances motivate the adoption of a quantized tensor train based encoder in this work to enable lightweight and scalable secure ISAC implementations.}

{\color{black}
While ISAC systems inherently support both sensing and communication functionalities, this paper does not aim to protect sensing information itself. Rather, it focuses on leveraging sensing-derived information to enhance the physical-layer security of communication. Existing works on secure ISAC can be broadly categorized into approaches that jointly optimize sensing and communication performance and those that exploit sensing feedback to improve communication confidentiality. This paper belongs to the latter category, where radar sensing is used to assist friendly jamming and beamforming design for secure communication under imperfect CSI and unknown eavesdropper locations.}

{\color{black}

\begin{table*}[h!]
\centering
\caption{{\color{black}
Comparison between representative secure ISAC approaches and the proposed method.}}
\label{tab:comparison2}
\resizebox{\textwidth}{!}{%
\begin{tabular}{|c|c|c|c|c|c|c|}
\hline
\textbf{Work} &
\textbf{ISAC} &
\textbf{Eve CSI} &
\textbf{Sensing Role} &
\textbf{Uncertainty Handling} &
\textbf{MC} &
\textbf{Method} \\
\hline
{\cite{su2023sensing, nasir2024joint,liu2024artificial}} & Yes & No (AoA only) &
Region/angle-aided design &
Sensitive to AoA errors; limited robustness &
No &
Optimization \\
\hline
{\cite{su2022secure, deligiannis2018secrecy,yang2023secure} } & Yes & Yes / bounded &
Joint waveform/beamforming &
Typically assumes perfect or bounded CSI; limited angular uncertainty treatment &
Limited &
SDR/FP/AO \\
\hline
{[\cite{hou2024optimal, ren2023robust, liu2022outage}} & No & Bounded/statistical &
Not considered &
Robust to user CSI errors; not ISAC sensing-guided &
Yes &
Robust opt. \\
\hline
{\cite{li2024deep, al2025predictive}} & Partial & Imperfect CSI &
Learning/prediction &
Data-driven CSI robustness; limited CRLB-guaranteed sensing constraints &
Yes &
Deep learning \\
\hline
\textbf{This work} & Yes & \textbf{No} &
Radar-echo-guided jamming &
\textbf{CSI \& angular uncertainty} with \textbf{CRLB constraints} &
\textbf{Yes} &
\textbf{Learning + TT-Q} \\
\hline
\end{tabular}}
\end{table*}

Table \ref{tab:comparison2} provides a structured comparison between representative secure ISAC and sensing-assisted physical-layer security approaches and the proposed framework.
}

\subsection{Motivation and Main Contributions} 
{\color{black}
In this work, we aim to enhance the physical-layer security of wireless communication by exploiting the sensing capability of ISAC systems. Radar sensing is used as an enabling tool to infer spatial and angular information about potential eavesdroppers and to guide friendly jamming and beamforming design. The security objective is defined exclusively for communication and is quantified using standard physical-layer security metrics, including secrecy rate and BLER at legitimate users. Sensing is not treated as a confidential service to be protected; instead, sensing accuracy is characterized through estimation metrics such as the Fisher information matrix and the Cram\'er Rao lower bound, which are imposed as constraints to guarantee reliable angular estimation while enhancing communication secrecy.
}

Secure ISAC design in practice remains challenging due to idealized assumptions and high complexity. Many existing approaches rely on perfect or partially known CSI for both users and Eves, which is unrealistic in mobile, fading, or cluttered environments. Others depend on computationally heavy joint optimizations of waveform, beamforming, and power allocation, making them impractical for real-time or edge deployment. Moreover, most prior studies are confined to narrowband scenarios, offering limited insight for scalable multicarrier systems. Another critical limitation lies in the use of angular information. Secure beamforming often assumes highly accurate AoA estimates and perfect array response models, yet in practice, AoA estimation is sensitive to noise and dynamics, and real antenna arrays suffer from impairments and mismatches. These issues create a fundamental gap: existing methods are fragile under CSI uncertainty, angular errors, hardware imperfections, and the practical demands of multicarrier ISAC.

To address these challenges, we propose a deep learning–based secure ISAC framework that avoids reliance on Eve's CSI and explicitly accounts for AoA errors. Our design leverages ISAC's sensing capability: radar echoes are used to identify vulnerable spatial regions, guiding a radar-assisted jamming controller. An autoencoder learns robust transmission policies under fading, while a TT-Q compression scheme ensures scalability and efficiency. Unlike conventional optimization-based methods, the proposed approach adapts to complex channel dynamics, supports real-time multicarrier deployment, and jointly optimizes secrecy, reliability, and sensing utility.

In summary, the main contributions of this paper are summarised as follows:
\begin{itemize}
    \item We propose a learning-based secure ISAC framework that integrates radar sensing with communication optimization. Using real-time radar echoes, the system adaptively identifies vulnerable spatial regions and directs friendly jamming, without requiring Eve's CSI or location knowledge.

    \item We develop a nonparametric method to estimate the FIM from radar echo distributions via $f$-divergence, enabling CRLB-guided jamming under sensing uncertainty. Trained on perturbed radar and channel samples, it remains robust to angular errors, mobility, and hardware imperfections.
    
  \item We design the framework to support both overlapping and non-overlapping subcarrier allocations, allowing fine-grained jamming and secure operation across multicarrier ISAC systems, including OFDM-based implementations.
  
   \item We introduce a compact encoder based on TT-Q decomposition, which achieves a parameter reduction of more than two orders of magnitude. This lightweight design facilitates real-time inference with minimal memory overhead while maintaining robust sensing and jamming performance.

\end{itemize}

\subsection{Paper Organization and Notations}
The remainder of this paper is organized as follows. Section~\ref{sec:single_carrier} presents the system model and friendly jamming design for single-carrier ISAC, including CRLB-based constraints and the nonparametric Fisher information estimator. Section~\ref{sec:multicarrier} extends the framework to multicarrier systems, focusing on subcarrier-level optimization and robustness to imperfect CSI and angular errors. Section~\ref{sec:nonoverlap} studies non-overlapping subcarrier allocation to separate communication from sensing/jamming and reduce interference. Section~\ref{sec:Numerical_Results} reports simulation results on secrecy rate, BLER, convergence, and CRLB robustness. Section~\ref{sec:conclusion} concludes with future research directions.

\textit{Notation:} Vectors and matrices are denoted by bold lowercase and bold uppercase letters, respectively. The operators $(\cdot)^*$, $(\cdot)^\mathrm{T}$, and $(\cdot)^\mathrm{H}$ denote complex conjugate, transpose, and Hermitian, while $\mathrm{trace}(\cdot)$ denotes the trace of a matrix. The modulus of a complex number is written as $|\cdot|$, and the Euclidean norm of a vector as $|\cdot|$. The partial derivative of a vector $\mathbf{a}$ with respect to a variable $o$ is denoted $\dot{\mathbf{a}}_o$. The notation $\mathcal{CN}(\mu, \sigma^2)$ represents a circularly symmetric complex Gaussian distribution with mean $\mu$ and variance $\sigma^2$, and $\mathbb{E}{\cdot}$ denotes expectation.

\section{Friendly Jamming-enabled Secure Single-Carrier ISAC}
\label{sec:single_carrier}
{\color{black} 
In this work, the base station does not attempt to explicitly identify or classify eavesdroppers among surrounding devices. Instead, consistent with standard physical layer security assumptions, any unintended receiver outside the set of legitimate users is treated as a potential eavesdropper. Radar sensing is used to infer coarse spatial and angular information about the surrounding environment, such as dominant reflection directions and angular regions of interest, rather than to determine device identities. Friendly jamming is then designed to suppress information leakage toward these angular regions in a worst-case manner, without requiring explicit knowledge of the number, location, or identity of eavesdroppers.
}
\subsection{System Model}
We consider a MIMO ISAC system where a base station (BS), equipped with $N_t$ transmit and $N_r$ receive antennas, serves $K$ single-antenna users while simultaneously performing sensing and FJ against a passive Eve with $N_e$ antennas. The system operates over a single-carrier flat-fading channel, as illustrated in Fig.~\ref{fig:ISAC_FJ_System_Journal}. The channel vector from the BS to user $k$ is denoted by $\mathbf{h}_k \in \mathbb{C}^{N_t}$, and the channel matrix from the BS to Eve is $\mathbf{H}_E \in \mathbb{C}^{N_t \times N_e}$. 
The estimated aggregated downlink channel matrix to all legitimate users is given by
$\mathbf{H}_B \triangleq 
\begin{bmatrix}
\mathbf{h}_1^H & \mathbf{h}_2^H & \cdots & \mathbf{h}_K^H
\end{bmatrix}^H \in \mathbb{C}^{K \times N_t}$. 
In this work, we do not focus on the CSI acquisition process itself. Instead, we assume the BS obtains imperfect CSI through standard pilot-based estimation and model the estimated channel as $\hat{\mathbf{H}}_B = \mathbf{H}_B + \Delta \mathbf{H}_B$, where $\Delta \mathbf{H}_B$ captures estimation errors due to noise, quantitation, and mobility.

\begin{figure}[t]
\centering
\includegraphics[width=\linewidth]{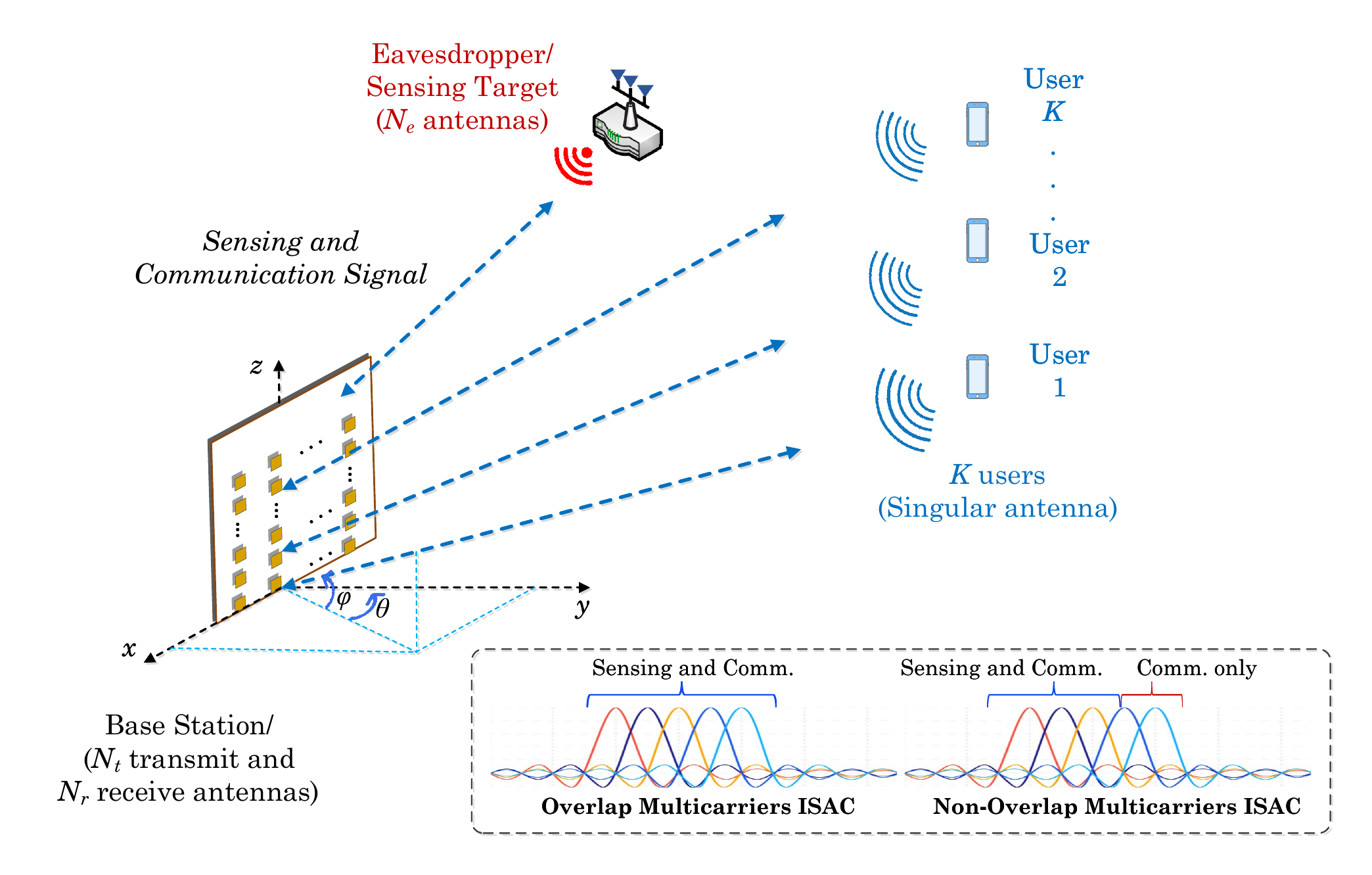}
\caption{{\color{black} Two multicarrier ISAC architectures considered in this work. 
In the overlapping scheme, all subcarriers are jointly used for communication, sensing, and friendly jamming. 
In the non-overlapping scheme, the subcarriers are partitioned into disjoint sets, with one set allocated to communication and the remaining set used for sensing and friendly jamming.}}
\label{fig:ISAC_FJ_System_Journal}
\end{figure}

The BS transmits a superposition of communication and FJ signals, which can be expressed as follows:
\begin{align}
\mathbf{x} = \sum\nolimits_{k=1}^{K} \mathbf{f}_k s_k + \mathbf{v}_\text{FJ} \eta
\end{align}
where $\mathbf{f}_k \in \mathbb{C}^{N_t}$ and $s_k$ are the beamformer and the data symbol intended for user $k$, respectively; $\mathbf{v}_\text{FJ}$ is the FJ beamforming vector and $\eta \sim \mathcal{CN}(0,u)$ is the jamming signal.
To prevent interference with legitimate users during sensing and jamming, the FJ beamforming vector $\mathbf{v}_\text{FJ}$ is restricted to lie in the null space of $\hat{\mathbf{H}}_B$, \textit{i.e.} $\mathbf{v}_\text{FJ} \in \mathcal{N}(\hat{\mathbf{H}}_B)$.

The received signals at user $k$ and Eve can be respectively expressed as
\begin{align}
y_k &= \mathbf{h}_k^H \mathbf{x} + n_k = \mathbf{h}_k^H \mathbf{f}_k s_k + \sum_{j \ne k} \mathbf{h}_k^H \mathbf{f}_j s_j + \mathbf{h}_k^H \mathbf{v}_\text{FJ} \eta + n_k, \\
\mathbf{y}_E &= \mathbf{H}_E^H \mathbf{x} + \mathbf{n}_E = \sum_{k=1}^{K} \mathbf{H}_E^H \mathbf{f}_k s_k + \mathbf{H}_E^H \mathbf{v}_\text{FJ} \eta + \mathbf{n}_E.
\end{align}
As a result, the achievable rate at user $k$ is
\begin{align}
R_{B,k} = \log_2\Big(1 + \frac{|\mathbf{h}_k^H \mathbf{f}_k|^2}{\sum_{j \ne k} |\mathbf{h}_k^H \mathbf{f}_j|^2 + |\mathbf{h}_k^H \mathbf{v}_\text{FJ}|^2 + \sigma_k^2} \Big)
\end{align}
where $n_k \sim \mathcal{CN}(0,\sigma_c^2)$ denotes the additive Gaussian noise at the $k$-th legitimate user, and 
$\mathbf{n}_E \sim \mathcal{CN}(\mathbf{0},\sigma_E^2 \mathbf{I})$ represents the additive Gaussian noise vector at  Eve. The information leakage rate at Eve due to user $K$'s signal is
{\small\begin{align}
\label{ep:Re_rate}
R_{E,k} = \log_2\det\Big( \mathbf{I} + \big(\zeta \mathbf{H}_E^H \mathbf{v}_\text{FJ} \mathbf{v}_\text{FJ}^H \mathbf{H}_E + \sigma_E^2 \mathbf{I} \big)^{-1} \mathbf{H}_E^H \mathbf{f}_k \mathbf{f}_k^H \mathbf{H}_E \Big).
\end{align}}

{\color{black}
Although Eve is assumed to be passive and does not transmit probing signals, it is assumed to have perfect knowledge of its downlink channel $\mathbf{H}_E$ for decoding purposes. The information leakage rate in \eqref{ep:Re_rate} is evaluated using a MIMO capacity formulation, which implicitly assumes optimal multi-antenna receive processing at the eavesdropper. Further, potential collusion among multiple passive eavesdroppers can be equivalently represented by an increased effective number of eavesdropper antennas $N_e$, which is naturally accommodated in the considered formulation.
}

\subsection{Problem Formulation of Single-carrier ISAC-FJ}
Unlike conventional approaches that exploit all available null-space dimensions for FJ transmission \cite{akgun2016exploiting, siyari2017friendly}, our method selectively activates FJ beams based on detected sensing targets. Specifically, the BS evaluates candidate beams $\mathbf{v}_i \in \mathcal{N}(\hat{\mathbf{H}}_B)$ using radar echo observations:
\begin{align}
\mathbf{y}_s^{(i)} = \alpha \mathbf{G}(\theta, \phi) \left( \sum\nolimits_{k=1}^K \mathbf{f}_k s_k + \mathbf{v}_i \eta \right) + \mathbf{n}_s
\end{align}
where $\alpha$ is the reflection coefficient, the angles $\theta$ and $\phi$ denote the azimuth and elevation angles of arrival, respectively. $\mathbf{G}(\theta, \phi)$ is the angular steering matrix, and $\mathbf{n}_s \sim \mathcal{CN}(0, \sigma_s^2 \mathbf{I})$. The steering matrix is given by $\mathbf{G}(\theta, \phi) = \mathbf{a}_x(\theta, \phi) \otimes \mathbf{a}_y(\theta, \phi)$,
where $\alpha$ is the reflection coefficient, $\mathbf{G}(\theta,\phi)$ is the angular steering matrix, and $\mathbf{n}_s \sim \mathcal{CN}(0,\sigma_s^2\mathbf{I})$. The angles $\theta$ and $\phi$ denote the azimuth and elevation, respectively. The steering matrix is given by
\begin{align}
\mathbf{G}(\theta,\phi) = \mathbf{a}_x(\theta,\phi) \otimes \mathbf{a}_y(\theta,\phi)
\end{align}
where $\mathbf{a}_x$ and $\mathbf{a}_y$ are the steering vectors along the $x$- and $y$-axes, $\theta$ and $\phi$ are the and $\otimes$ denotes the Kronecker product. 

To quantify angular estimation performance, we employ the Fisher information for the angular parameters 
$\omega \in \{\theta,\phi\}$. 
Since the FIM for the parameter vector $[\theta,\phi]^T$ is in general $2\times2$, we consider the scalar Fisher information corresponding to each parameter independently. 
The Fisher information for parameter $\omega$ is defined as
\begin{equation}
    J_{\omega} \triangleq 
    \frac{2\rho}{\sigma_s^2} 
    \bigg\| \frac{\partial \mathbf{G}(\theta,\phi)\mathbf{v}_{\mathrm{FJ}}}{\partial \omega} \bigg\|^2
\end{equation}
where $\rho$ is the SNR. 
The corresponding CRLB is given by
\begin{equation}
\mathrm{CRLB}_\omega = \frac{1}{J_{\omega}}
= \frac{\sigma_s^2}{2\rho \Big\| \frac{\partial \mathbf{G}(\theta,\phi)\mathbf{v}_{\mathrm{FJ}}}{\partial \omega} \Big\|^2}
\end{equation}
which gives the minimum variance of any unbiased estimator of $\omega$. Smaller CRLB values indicate higher accuracy in estimating the azimuth ($\theta$) and elevation ($\phi$) angles.

We adopt a worst-case scenario to ensure secure communication in the presence of uncertain CSI of Eve. Specifically, Eve is assumed to decode under its most favourable channel realization. The BS then seeks to maximize the users’ sum secrecy rate while enforcing constraints on $\mathrm{CRLB}_{\theta}$ and $\mathrm{CRLB}_{\phi}$. The secrecy rate optimization problem is formulated as:
\begin{subequations} \label{eq:problem10}
	\begin{IEEEeqnarray}{cl}
		\ &\underset{\{\gamma_k\},\,\{\zeta\}}{\mathrm{max}} \quad   \sum_{k=1}^{K} \left[ R_{B,k} - \max(R_{E,k}) \right]^+
		\label{eq:single_eq} \\
		& \st  \ \mathrm{CRLB}_{\omega}\!\left(\mathbf{v}_{\mathrm{FJ}}, \zeta\right)
   \leq \mathrm{CRLB}_{0,\omega}, \ \forall \omega \in \{\theta,\phi\}
\label{eq:single_eq_a} \\
&\quad\ \mathbf{v}_{\mathrm{FJ}} \in \mathcal{N}\!\left(\hat{\mathbf{H}}_{B}\right)
\label{eq:single_eq_b} \\
&\quad\ \sum_{k=1}^{K} \|\mathbf{f}_k\|^2 + \zeta \|\mathbf{v}_{\mathrm{FJ}}\|^2
   \leq P_{\max}
\label{eq:single_eq__c}
	\end{IEEEeqnarray}
\end{subequations}
where $[x]^+ = \max\{0,x\}$ and $P_{\max}$ is the total transmit power budget at the BS. CRLB constraints in \eqref{eq:single_eq_a} ensure that the probing and FJ design at the BS maintains a guaranteed level of sensing accuracy. 
Since the CRLB represents the fundamental lower bound on the variance of any unbiased AoA estimator, imposing 
$\mathrm{CRLB}_\omega(\mathbf{v}_{FJ},\rho) \leq \mathrm{CRLB}_{0,\omega}$ guarantees that the BS can still obtain reliable angular estimates even in the presence of friendly jamming. This makes the CRLB a tractable and meaningful metric for joint optimization: it is deterministic, depends only on the transmit design variables, and allows the BS to explicitly trade off secrecy enhancement against sensing performance.


\begin{figure*}[!htb]
\centering
\includegraphics[width=.9\linewidth]{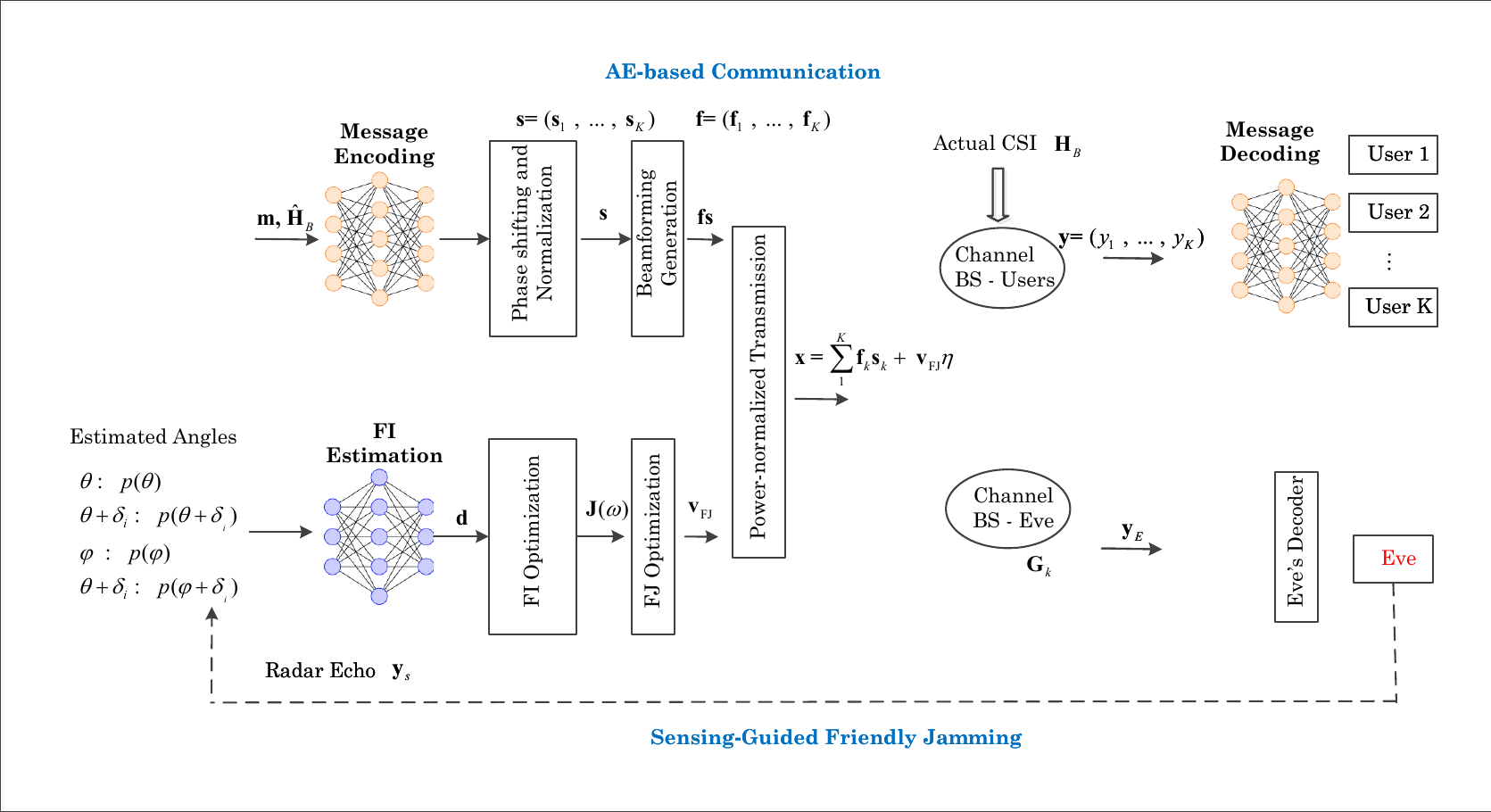}
\caption{{\color{black} ISAC FJ workflow: (1) message encoding, (2) beamforming generation, (3) power-normalized transmission, (4) radar echo acquisition, (5) Fisher information estimation, (6) friendly jamming optimization, and (7) message decoding. The dashed arrow highlights the sensing feedback loop, which enables adaptive jamming without requiring explicit eavesdropper CSI. }}
\label{fig:proposed_framework}
\end{figure*}

The proposed framework, illustrated in Fig.~2, comprises two components: (i) a communication learning module and (ii) a sensing-guided FJ mechanism. Instead of relying on closed-form FIM derivations under Gaussian assumptions, we adopt a data-driven nonparametric approach based on $f$-divergence \cite{berisha2014empirical, berisha2015empirically}, which estimates the FIM directly from radar echo samples without requiring explicit likelihood models.

The received signal has mean $\mathbf{x}_s(\omega, \omega)$ and covariance $\sigma_s^2 \mathbf{I}$, the FIM with respect to parameter vector $\omega$ is:
\begin{align}
\mathbf{J}_\omega = \frac{2}{\sigma_s^2} \, \mathfrak{R}\left( \frac{\partial \mathbf{x}_s^H}{\partial \omega} \frac{\partial \mathbf{x}_s}{\partial \omega} \right).
\end{align}
The communication module at the BS learns to map user messages to beamformers via an end-to-end model. Each input message $m_k \in \mathcal{M}$ is represented as a one-hot vector of size $2 \times M$, with $M$ denoting the number of codewords.  It then generates beamformers $\mathbf{f}_k$ while the decoder reconstructs messages from received signals. Training minimizes the categorical cross-entropy loss and an additional unsupervised objective that maximizes the total achievable rate $R_s = \sum_k R_{B,k}$ through the rate loss $\mathcal{L}_{\text{rate}} = -R_s$. The generation of FJ beams satisfying the CRLB constraints is detailed in the following section.

\subsection{FJ-based Fisher Information Estimation}
The effectiveness of FJ depends on angular estimation accuracy, quantified by the CRLB. Assuming target angles $(\theta, \phi)$ are estimated (\textit{e.g.}, via MUSIC), we adopt the $f$-divergence-based method in \cite{berisha2014empirical} to infer the FIM.

For each estimated channel realization $\mathbf{H}_B$, we generate perturbed angular samples $(\theta + \delta_{i1}, \phi + \delta_{i2})$, where $\delta_i \in \mathbb{R}^d$ represents small perturbations and $d$ is the parameter dimension. Since both azimuth and elevation are estimated jointly, $d = 2$. To recover the symmetric $d \times d$ FIM, at least $D \geq d(d+1)/2$ perturbed samples are required. The divergence between the original and perturbed sample distributions is then approximated as:
\begin{align}
D_f(p(\theta, \phi) \,\|\, p((\theta+\delta_{i1}, \phi+\delta_{i2})) \approx \frac{1}{2} \delta_i^\top \mathbf{J}_\theta^{(i)} \delta_i.
\end{align}
{\color{black}
The proposed $f$-divergence-based nonparametric FIM estimator operates in a local perturbation regime, where the divergence between the nominal and perturbed radar echo distributions admits a second-order quadratic approximation, as shown in (12). By explicitly controlling the perturbation magnitude and the number of perturbed samples during training, the estimator remains within this local regime, ensuring numerical stability of the least-squares FIM reconstruction in (15). Importantly, the reconstructed FIM is defined with respect to the angular parameter vector (azimuth and elevation) and therefore has a fixed dimension of $2\times2$, independent of the number of antennas or subcarriers. As a result, increasing the antenna array size does not increase the dimensionality of the FIM estimation problem and does not lead to high-dimensional instability. In addition, the FIM and the corresponding CRLB values are evaluated using batch-averaged radar echo samples rather than a single realization, which reduces stochastic fluctuations. The positive semidefinite constraint imposed in (16c) further guarantees physically meaningful FIM estimates and stabilizes the resulting CRLB constraints used for friendly jamming optimization.
}

Regarding Fisher information neural estimation, which is first investigated in \cite{duy2022fisher}, the neural discriminator is denoted by $T_i$ associated with perturbation $\delta_i$. 
It is trained to approximate the $f$-divergence between the original distribution 
$p(\theta,\phi)$ and its perturbed counterpart $p((\theta+\delta_{i1}, \phi+\delta_{i2}))$. 
For each sample $\omega_{i,j}$, the output $T_i(\omega_{i,j})$ provides a scalar score 
that is used to estimate the divergence in \eqref{eq:divergent}. The neural discriminator $T_i$ estimates each divergence as:
\begin{align}
d(\delta_i) \approx \frac{1}{N} \sum_{j=1}^N T_i(\mathbf{\omega}_{i, j}) - \log\Big( \frac{1}{N} \sum_{j=1}^N e^{T_i(\mathbf{\omega}_{i,j})} \Big)
\label{eq:divergent}
\end{align}
where $N$ is the number of samples. 
The resulting divergence vector $\mathbf{d} \triangleq [d(\delta_1), \dots, d(\delta_M)]^\top$ relates to the vectorized FIM via: $2 \mathbf{d} = \mathbf{U} \mathbf{f}$,
where each row $\mathbf{u}_i$ is constructed as:
\begin{align}
\mathbf{u}_i = [\delta_{i1}^2, \dots, \delta_{id}^2, 2\delta_{i1}\delta_{i2}, \dots, 2\delta_{i(d-1)}\delta_{id}]^\top.
\end{align}
Finally, the least-squares estimate of $\mathbf{f}$ is given by:
\begin{align}
\hat{\mathbf{f}}_{\text{LS}} = 2 (\mathbf{U}^\top \mathbf{U})^{-1} \mathbf{U}^\top \mathbf{d}.
\end{align}
This yields the optimal jamming direction $\mathbf{v}_\text{FJ}$ by aligning with the beam that maximizes angular sensitivity:
\begin{subequations}\label{eq:P11a}
\begin{IEEEeqnarray}{cl}
\min_{\mathbf{f}} \quad & \| 2\mathbf{d} - \mathbf{U} \mathbf{f} \|_2^2 \\
\text{s.t.} \quad & f_k = \hat{f}_k^{\text{LS}}, \quad \forall k \in \{1,\cdots,d\} \label{eq:15b}\\
& \mathrm{mat}(\mathbf{f}) = \mathbf{J}_\theta \succeq 0.
\end{IEEEeqnarray}
\end{subequations}
Problem \eqref{eq:P11a} ensures consistency with the least-squares estimate while constraining the reconstructed FIM to be positive semidefinite. This guarantees a physically meaningful solution that captures the system’s angular sensitivity, enabling the friendly jamming beam $\mathbf{v}_{\text{FJ}}$ to be steered toward the directions most susceptible to eavesdropping.

\subsection{Learning-based FJ for ISAC Framework}
The proposed ISAC-based FJ framework, illustrated in Fig.~2, enables the BS to train an end-to-end communication model that jointly optimizes the beamforming vectors for legitimate users while accounting for imperfect CSI. Each input message $\textbf{m}$ is mapped to a beamforming vector, conditioned on the estimated CSI, which is then transmitted and decoded at the receiver. To enhance PLS under uncertainty, the BS concurrently probes multiple candidate beams from the null space $\mathcal{N}(\hat{\mathbf{H}}_B)$ and collects the associated radar echo samples. These samples are used to optimize the FJ beam by maximizing angular sensitivity.

The overall training objective integrates several components: communication performance, FIM estimation (FIME), positive semidefinite (PSD) enforcement on the FIM, and CRLB satisfaction. The FJ optimization problem is formulated as:
\begin{subequations} 
	\begin{IEEEeqnarray}{cl}
\min_{\mathbf{v}_{\mathrm{FJ}}} \quad & \sum_{\omega \in \{\theta, \phi\}} \text{CRLB}_\omega (\mathbf{v}_\text{FJ}, u)  \\
\text{s.t.} \quad & \text{CRLB}_\omega(\mathbf{v}_\text{FJ}) \leq \text{CRLB}_{0,\omega}, \quad \forall \omega \in \{\theta, \phi\}.
\end{IEEEeqnarray}\end{subequations} 
This formulation ensures that the optimized FJ beam: $i$) maximizes Fisher information for angular parameters, $ii$) lies within the null space of the legitimate channel to prevent communication interference, $iii$) results in a valid PSD FIM, and $iv$) satisfies predefined CRLB-based estimation accuracy constraints.

Following training, the BS selects the FJ beam $\mathbf{v}_\text{FJ}$ that maximizes the trace of the estimated FIM, effectively steering energy toward the most angularly sensitive directions in the environment. The FIME loss is defined as: $\mathcal{L}_{\mathrm{FIME}} = \mathrm{CRLB}_\theta + \mathrm{CRLB}_\phi$, which captures the overall estimation uncertainty for angular parameters. The training process is outlined in Algorithm~\ref{alg:sequential_training}.

\begin{algorithm}[t]
\caption{Neural ISAC-Based Friendly Jamming Optimization}
\label{alg:sequential_training}
\begin{algorithmic}[1]
\small
\Require Estimated channel $\hat{\mathbf{H}}_B$, initialized encoder/decoder, initial FJ beam $\mathbf{v}_{\mathrm{FJ}}$.
\State \textbf{Stage 1: Communication Beamforming Optimization}
\For {each communication training epoch}
    \State Sample a mini-batch of input messages $\{m_k\}$;
    \State Generate communication beams $\{\mathbf{f}_k\}$;
    \State Compute communication loss $\mathcal{L}_{\text{rate}}$;
    \State Update encoder and decoder parameters using gradients of $\mathcal{L}_{\text{rate}}$;
\EndFor

\State \textbf{Stage 2: Friendly Jamming Beam Optimization via FIME}
\For {each FIME training iteration}
    \State Fix optimized communication beams $\{\mathbf{f}_k\}$;
    \State Probe candidate FJ beams $\mathbf{v}_{\mathrm{FJ}}$ in the null space of $\hat{\mathbf{H}}_B$;
    \State Estimate FIM using radar echoes;
    \State Compute CRLB values for angles $\theta$ and $\phi$;
    \If {CRLB$_\theta \leq$ CRLB$_{0,\theta}$ \textbf{and} CRLB$_\phi \leq$ CRLB$_{0,\phi}$}
        \State Accept update: minimize $\mathcal{L}_{\text{FIME}}$ and update $\mathbf{v}_{\mathrm{FJ}}$;
    \Else
        \State Reject update: discard gradient step and optionally reinitialize $\mathbf{v}_{\mathrm{FJ}}$;
    \EndIf
\EndFor
\Return The optimal beamformers: $\{\mathbf{f}_k^*\}_{\forall k}$ and $\mathbf{v}^*_{\mathrm{FJ}}$.
\end{algorithmic}
\end{algorithm}

\section{Friendly Jamming Secure  Multicarrier ISAC}
\label{sec:multicarrier}
Single-carrier ISAC systems face limitations in handling frequency-selective fading, spatial resolution, and joint sensing-communication optimization, especially under practical impairments like imperfect CSI and dynamic environments \cite{xiao2024novel, nguyen2023multiuser, liu2022integrated}. In contrast, multicarrier OFDM-based ISAC offers greater spectral efficiency, finer frequency resolution, and more design flexibility. Subcarrier-level adaptation enables precise beamforming, robust power allocation, and frequency-selective jamming, making OFDM more practical for secure ISAC \cite{cui2021integrating}. Further, unlike the perfect AoA estimation discussed in Section \ref{sec:single_carrier}, the imperfect AoA will be considered in the worst-case scenario.
The system model is illustrated in Fig.~\ref{fig:ISAC_FJ_System_Journal}, 
which shows both overlapping and non-overlapping multicarrier ISAC configurations. 
In this section, we will present the overlap multicarrier scheme, 
where the entire sub-bandwidth is simultaneously exploited for both sensing and communication.

\subsection{Multicarrier MIMO-OFDM System Model}
Before diving into the multicarrier system model, we highlight a key property of our design: the sensing accuracy, measured by the FIM and the resulting CRLB, depends only on the transmit covariance and not on the detailed pilot waveform. This guarantees that our framework is valid for a wide class of OFDM implementations and subcarrier allocations, as formalized below.

\begin{theorem}[Waveform-agnostic FIM for OFDM probing]\label{thm:ofdm_agnostic}
Consider the received radar echo on subcarrier $n \in \mathcal{S}_r$ modeled as
\[
\mathbf{Y}_r[n] \sim \mathcal{CN}\!\big(\boldsymbol{\mu}[n],\,\sigma_r^2\mathbf I\big), 
\qquad 
\boldsymbol{\mu}[n] = \mathbf G(\boldsymbol{\vartheta})\,\mathbf S_r[n]
\]
where $\boldsymbol{\vartheta}$ collects the angular parameters (\textit{e.g.}, AoA), 
$\mathbf G(\boldsymbol{\vartheta})$ is the array/system response, 
and $\mathbf S_r[n]\in\mathbb C^{N_t\times T}$ is the OFDM probing matrix 
with $N_t$ antennas and $T$ time slots. By defining 
$\mathbf A_\omega := \tfrac{\partial \mathbf G}{\partial \omega}$ 
and $\mathbf R_s[n] := \mathbf S_r[n]\mathbf S_r[n]^{\!H}$, the Fisher information for parameter $\omega$ is
\begin{equation}\label{eq:FIM_ofdm}
J_\omega \;=\; \frac{2}{\sigma_r^2}\sum_{n\in\mathcal{S}_r} 
\operatorname{tr}\!\Big(\mathbf R_s[n]\,\mathbf A_\omega^{\!H}\mathbf A_\omega\Big).\end{equation}
It is clear that $J_\omega$ depends on $\mathbf S_r[n]$ only through the transmit covariance $\mathbf R_s[n]$. 
In particular, for any unitary $\mathbf U\in\mathbb C^{T\times T}$, 
if $\mathbf S'_r[n]=\mathbf S_r[n]\mathbf U$, then $\mathbf R'_s[n]=\mathbf R_s[n]$ 
and hence $J'_\omega=J_\omega$.
\end{theorem}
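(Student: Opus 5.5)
The proof rests on the Slepian--Bamba formula for a complex Gaussian observation whose covariance does not depend on the parameter. We vectorize each subcarrier's echo, $\mathbf y[n]=\operatorname{vec}(\mathbf Y_r[n])\sim\mathcal{CN}(\operatorname{vec}(\boldsymbol\mu[n]),\sigma_r^2\mathbf I)$, and assume the echoes on distinct subcarriers in $\mathcal S_r$ are independent, as the per-subcarrier OFDM model implicitly does. Then the log-likelihood is a sum over $n$, and the Fisher information adds across subcarriers. For a single subcarrier, because the covariance $\sigma_r^2\mathbf I$ is independent of $\boldsymbol\vartheta$, the standard result gives
\[
J_\omega[n]=\frac{2}{\sigma_r^2}\,\mathfrak{R}\Big\{\Big(\frac{\partial \operatorname{vec}(\boldsymbol\mu[n])}{\partial\omega}\Big)^{H}\frac{\partial \operatorname{vec}(\boldsymbol\mu[n])}{\partial\omega}\Big\}.
\]
This matches the mean-only FIM expression already stated in Section~\ref{sec:single_carrier}. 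I would quote it directly rather than rederive it from the score function.

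Next, since $\mathbf S_r[n]$ is a known design quantity independent of $\omega$, we have $\partial\boldsymbol\mu[n]/\partial\omega=\mathbf A_\omega\mathbf S_r[n]$. The inner product of the vectorized derivatives is a Frobenius inner product, $\operatorname{tr}\big((\mathbf A_\omega\mathbf S_r[n])^H\mathbf A_\omega\mathbf S_r[n]\big)=\operatorname{tr}\big(\mathbf S_r[n]^H\mathbf A_\omega^H\mathbf A_\omega\mathbf S_r[n]\big)$. By cyclicity of the trace this equals $\operatorname{tr}(\mathbf R_s[n]\mathbf A_\omega^H\mathbf A_\omega)$.

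This quantity is already real and nonnegative: it equals $\|\mathbf A_\omega\mathbf S_r[n]\|_F^2$, or equivalently it is the trace of a product of two Hermitian PSD matrices. The $\mathfrak R\{\cdot\}$ therefore drops, and summing over $n\in\mathcal S_r$ yields \eqref{eq:FIM_ofdm}.

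For the invariance claim, if $\mathbf S'_r[n]=\mathbf S_r[n]\mathbf U$ with $\mathbf U\mathbf U^H=\mathbf I_T$, then $\mathbf R'_s[n]=\mathbf S_r[n]\mathbf U\mathbf U^H\mathbf S_r[n]^H=\mathbf R_s[n]$. Substituting into \eqref{eq:FIM_ofdm} gives $J'_\omega=J_\omega$, so the waveform dependence is only through $\mathbf R_s[n]$.

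The main obstacle is not the algebra but making the modeling assumptions explicit. Three points need stating. First, the FIM is additive only if the subcarrier echoes are independent, with white noise across subcarriers and time slots. Second, the array response $\mathbf G$ is taken as frequency-flat over $\mathcal S_r$, as written in the statement. If it varied with $n$, the same argument would go through with $\mathbf A_\omega[n]$ in place of $\mathbf A_\omega$. Third, any unknown reflection coefficient is either absorbed into $\mathbf G$ or treated as known, so that the scalar $J_\omega$ is the relevant diagonal FIM entry rather than a Schur complement.
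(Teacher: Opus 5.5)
Your proposal is correct and follows the paper's proof sketch essentially step for step. Both use the mean-only Gaussian FIM per subcarrier, then $\partial_\omega\boldsymbol\mu[n]=\mathbf A_\omega\mathbf S_r[n]$, cyclicity of the trace to reach $\operatorname{tr}(\mathbf R_s[n]\mathbf A_\omega^H\mathbf A_\omega)$, summation over $\mathcal S_r$, and $\mathbf U\mathbf U^H=\mathbf I$ for the invariance. Your explicit independence-across-subcarriers and frequency-flat-response assumptions, which the paper uses silently when summing, and your remark that the real part is redundant are useful clarifications but do not change the argument.
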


\begin{proof}[Proof sketch]
For Gaussian white noise with the mean depending on $\omega$, the FIM per subcarrier is
\begin{equation}
J_\omega(n) = \frac{2}{\sigma_r^2}\,\big\|\partial_\omega \boldsymbol{\mu}[n]\big\|_F^2
= \frac{2}{\sigma_r^2}\operatorname{tr}\!\Big((\partial_\omega \boldsymbol{\mu}[n])^{\!H}
\,\partial_\omega \boldsymbol{\mu}[n]\Big).
\end{equation}
Since $\boldsymbol{\mu}[n]=\mathbf G(\boldsymbol{\vartheta})\,\mathbf S_r[n]$, 
we obtain $\partial_\omega \boldsymbol{\mu}[n]=\mathbf A_\omega \mathbf S_r[n]$. 
From the cyclic property of trace, it follows that
\begin{align}
J_\omega(n) 
  &= \frac{2}{\sigma_r^2}\operatorname{tr}\!\big(
     \mathbf S_r[n]^{\!H}\mathbf A_\omega^{\!H}\mathbf A_\omega \mathbf S_r[n]\big) \\
  &= \frac{2}{\sigma_r^2}\operatorname{tr}\!\big(
     \underbrace{\mathbf S_r[n]\mathbf S_r[n]^{\!H}}_{\mathbf R_s[n]}
     \mathbf A_\omega^{\!H}\mathbf A_\omega\big).
\end{align}
 Summing over $n\in\mathcal{S}_r$ gives \eqref{eq:FIM_ofdm}. 
If $\mathbf S'_r[n]=\mathbf S_r[n]\mathbf U$ with $\mathbf U$ unitary, then 
$\mathbf R'_s[n]=\mathbf R_s[n]$ and hence $J'_\omega=J_\omega$.
\end{proof}

\noindent
\textbf{Implication:} Any OFDM probing waveform that yields the same covariance 
$\{\mathbf R_s[n]\}$ (\textit{i.e.} same per-subcarrier power/aperture distribution) 
produces identical $J_\omega$ and CRLB. 
Thus, the different pilot placements or symbol mappings but identical $\mathbf R_s[n]$ are equivalent in terms of Fisher information. 
With colored noise covariance $\boldsymbol{\Sigma}$, the result generalizes to
\begin{align}
J_\omega = 2\sum_{k\in\mathcal{S}_r}\operatorname{tr}\!\big(\mathbf R_s[n]\,
\mathbf A_\omega^{\!H}\boldsymbol{\Sigma}^{-1}\mathbf A_\omega\big)
\end{align}
which still depends on $\mathbf S_r[n]$ only through $\mathbf R_s[n]$. For wideband or beam-squint arrays, replacing $\mathbf{A}_\omega$ by its frequency-dependent form $\mathbf{A}_\omega[n]$ preserves the same conclusion on a per-subcarrier basis.

\subsection{Problem Formulation of Overlap Multicarrier ISAC-FJ}
We consider a multicarrier MIMO-OFDM ISAC system operating over $N$ orthogonal subcarriers indexed by $n \in \{1, \ldots, N\}$, while also performing radar sensing and FJ. Each user $k$ transmits a data symbol $s_k^{(n)} \in \mathbb{C}$ on subcarrier $n$, with $\mathbb{E}[|s_k^{(n)}|^2] = 1$. 

The transmitted signal consists of two distinct components: 
(i) structured OFDM pilots, which serve both communication channel estimation and radar sensing, 
and (ii) a friendly jamming (FJ) signal, which is designed to degrade Eve’s reception. 
Only the OFDM pilots contribute to radar echo processing and parameter estimation, 
while the FJ signal acts solely as interference. 
The transmitted signal from the BS on subcarrier $n$ can be expressed as
\begin{align}
\mathbf{x}^{(n)} = \sum_{k=1}^K \sqrt{\xi_k^{(n)}} \mathbf{f}_k^{(n)} s_k^{(n)} + \sqrt{\zeta^{(n)}} \mathbf{v}^{(n)} \eta^{(n)}
\end{align}
where $\xi_k^{(n)} $ is the communication power allocation coefficient, $\zeta^{(n)}$ is the jamming power coefficient, and $\eta^{(n)} \sim \mathcal{CN}(0,1)$ is the jamming signal. By OFDM, the received signal at user $k$ on subcarrier $n$ is given by
\begin{align}
y_k^{(n)} = (\mathbf{h}_k^{(n)})^H \mathbf{x}^{(n)} + n_k^{(n)}.
\end{align}
The radar echo received at the BS across all subcarriers can be compactly expressed as
\begin{align}
    \mathbf{Y}_s = \mathbf{G}(\theta, \phi) \, \boldsymbol{\alpha} \, \mathbf{X} + \mathbf{N}_s
\end{align}
where \( \mathbf{X} = [\mathbf{x}^{(1)}, \dots, \mathbf{x}^{(N)}] \in \mathbb{C}^{N_t \times N} \) is the transmit signal matrix across \( N \) subcarriers, \( \mathbf{Y}_s \in \mathbb{C}^{N_r \times N} \) is the corresponding stacked radar echo, and \( \mathbf{N}_s \in \mathbb{C}^{N_r \times N} \) is the radar noise matrix. The bistatic sensing channel \( \mathbf{G}(\theta, \phi) = \mathbf{b}(\theta, \phi)\mathbf{a}^H(\theta, \phi) \in \mathbb{C}^{N_r \times N_t} \) is assumed constant across subcarriers, and the reflection coefficient is modeled as a diagonal matrix \( \boldsymbol{\alpha} = \mathrm{diag}(\alpha^{(1)}, \dots, \alpha^{(N)}) \in \mathbb{C}^{N \times N} \), where each \( \alpha^{(n)} \in \mathbb{C} \) captures the frequency-dependent reflectivity at subcarrier \( n \). Under the practical assumption that the target reflection is frequency-flat, we simplify with \( \alpha^{(n)} = \alpha \) for all \( n \), yielding \( \boldsymbol{\alpha} = \alpha \mathbf{I}_N \), and the model reduces to
\begin{align}
    \mathbf{Y}_s = \alpha \, \mathbf{G}(\theta, \phi) \, \mathbf{X} + \mathbf{N}_s.
\end{align}
The total transmit power averaged over all subcarriers is then constrained as
\begin{align}
\frac{1}{N} \sum_{n=1}^{N} \mathbb{E}\left[\| \mathbf{x}^{(n)} \|^2 \right]
&= \frac{1}{N} \sum_{n=1}^{N} \Big( 
\sum_{k=1}^{K} \xi_k^{(n)} \| \mathbf{f}_k^{(n)} \|^2 
+ \zeta^{(n)} \| \mathbf{v}^{(n)} \|^2 \Big 
) \notag \\
&\leq P_{\max}.
\end{align}

Let $\mathbf{H}_E^{(n)} \in \mathbb{C}^{N_t \times N_e}$ denote the downlink channel from the BS to Eve on subcarrier $n$. The achievable rate for user $k$ on subcarrier $n$, along with the corresponding information leakage rate to Eve, is defined in \eqref{eq:Rate_Bob_Eve_Norm}. Here, $\bar{\tau}$ represents the fraction of time or frequency resources allocated to communication. These expressions explicitly account for multiuser interference and the impact of jamming leakage at the receivers.

\begin{figure*}[!t]
\begin{equation}
\begin{split}
R_k^{(n)} &= \bar{\tau} \log_2 \Bigg( 
1 + \frac{ 
\xi_k^{(n)} \left| (\mathbf{h}_k^{(n)})^H \mathbf{f}_k^{(n)} \right|^2 
}{
\sum\limits_{j \ne k} \xi_j^{(n)} \left| (\mathbf{h}_k^{(n)})^H \mathbf{f}_j^{(n)} \right|^2 
+ \zeta^{(n)} \left| (\mathbf{h}_k^{(n)})^H \mathbf{v}^{(n)} \right|^2 + \sigma_c^2 
} \Bigg), \\
R_{E,k}^{(n)} &=\bar{\tau} \log_2 \det \Big( \mathbf{I} + 
\left( 
\zeta^{(n)} \mathbf{H}_E^{(n)H} \mathbf{v}^{(n)} \mathbf{v}^{(n)H} \mathbf{H}_E^{(n)} 
+ \sigma_E^2 \mathbf{I} 
\right)^{-1} 
\mathbf{H}_E^{(n)H} \mathbf{f}_k^{(n)} \mathbf{f}_k^{(n)H} \mathbf{H}_E^{(n)} 
\Big).
\end{split}
\label{eq:Rate_Bob_Eve_Norm}
\end{equation}
\hrulefill
\vspace*{0pt}
\end{figure*}

We denote by \( \gamma_k^{(n)} \) the communication power allocated to user \(k\) on subcarrier \(n\), and by \( \zeta^{(n)} \) the power allocated to FJ on subcarrier \(n\). We aim to jointly maximize the total secrecy rate while ensuring high-quality sensing and satisfying a global power constraint across all subcarriers. Given the multicarrier MIMO-OFDM model and the definitions of user rate \( R_k^{(n)} \) and leakage rate \( R_{E,k}^{(n)} \), the secrecy rate per user per subcarrier is defined as \( R_{\mathrm{sec},k}^{(n)} = \left[ R_k^{(n)} - R_{E,k}^{(n)} \right]^+ \). 

The resulting multicarrier secrecy-optimized power allocation problem can be formulated as follows:
\begin{subequations} \label{eq:problem}
	\begin{IEEEeqnarray}{cl}
		\ &\underset{\{\gamma_k^{(n)}\},\,\{\zeta^{(n)}\}}{\mathrm{max}} \quad   \sum_{n=1}^{N} \sum_{k=1}^{K}
\left[ R_k^{(n)} - R_{E,k}^{(n)} \right]^+
		\label{eq:objective29} \\
		& \st\  \frac{1}{N} \sum_{n=1}^{N}\Bigg(\sum_{k=1}^{K}
   \gamma_k^{(n)}\|\mathbf f_k^{(n)}\|^2
   + \zeta^{(n)}\|\mathbf v^{(n)}\|^2\Bigg) \leq P_{\max}\quad
\label{eq:constraint_a} \\[1ex]
&\,\quad \sum_{n=1}^{N} \mathrm{CRLB}_\theta^{(n)}
  \leq \mathrm{CRLB}_\theta^{\max}
\label{eq:constraint_b} \\
&\,\quad \sum_{n=1}^{N} \mathrm{CRLB}_\phi^{(n)}
  \leq \mathrm{CRLB}_\phi^{\max}.
\label{eq:constraint_c} 
	\end{IEEEeqnarray}
\end{subequations}
Constraint \eqref{eq:constraint_a} imposes a limit on the average total transmit power across all subcarriers, accounting for both the power allocated to the communication beams \( \mathbf{f}_k^{(n)} \) and the friendly jamming beam \( \mathbf{v}^{(n)} \). Constraints \eqref{eq:constraint_b} and \eqref{eq:constraint_c} ensure the accuracy of angle estimation for azimuth (\( \theta \)) and elevation (\( \phi \)), respectively, using the CRLB computed from Fisher information. If necessary, constraint \eqref{eq:constraint_a} can be reformulated as a per-subcarrier power constraint to enable more precise control of the transmit power allocation as follows:
\begin{equation}
\label{eq:constraint_26a_prime}
\sum_{k=1}^{K} \gamma_k^{(n)} \| \mathbf{f}_k^{(n)} \|^2 
+ \zeta^{(n)} \| \mathbf{v}^{(n)} \|^2 \leq \frac{P_{\max}}{N}, \quad \forall n.
\end{equation}

The transmit precoding matrix on each subcarrier consists of the stacked communication beams and the friendly jamming signal. Specifically, the total precoder can be written as \( \mathbf{F}^{(n)} = \mathbf{W}^{(n)} \boldsymbol{\Gamma}^{1/2} + \mathbf{v}^{(n)} \overline{\boldsymbol{\eta}}^T \), where \( \mathbf{W}^{(n)} = [\mathbf{f}_1^{(n)}, \ldots, \mathbf{f}_K^{(n)}] \in \mathbb{C}^{N_t \times K} \) is the communication beamformer matrix, and \( \overline{\boldsymbol{\eta}} \in \mathbb{C}^{1 \times N} \) is the time-domain sensing signal. To avoid interfering with legitimate users, the FJ beam \( \mathbf{v}^{(n)} \) is projected into the null space of the estimated channel matrix \( \hat{\mathbf{H}}_B^{(n)} = [\hat{\mathbf{h}}_1^{(n)}, \ldots, \hat{\mathbf{h}}_K^{(n)}] \), such that the following condition holds:
\begin{equation}
\mathbf{v}^{(n)} \in \mathcal{N}(\hat{\mathbf{H}}_B^{(n)}), \quad \forall n.
\end{equation}
The sensing quality on subcarrier \( n \) is quantified via the Fisher information is presented as follows:
\begin{equation}
\mathbf{J}_\omega^{(n)} = \frac{2 \zeta^{(n)}}{\sigma_s^2} \left\| \frac{\partial \mathbf{G}(\theta, \phi)\mathbf{v}^{(n)}}{\partial \omega} \right\|^2, \quad \omega \in \{\theta, \phi\}
\end{equation}
where the corresponding CRLB is given by:
\[
\mathrm{CRLB}_\omega^{(n)} = \frac{1}{\mathbf{J}_\omega^{(n)}}, \quad \omega \in \{\theta, \phi\}.
\]

Constraints \eqref{eq:constraint_b} and \eqref{eq:constraint_c} enforce per-subcarrier CRLBs to guarantee radar angle estimation accuracy across azimuth and elevation. The secrecy rate maximization in \eqref{eq:constraint_26a_prime} is challenging due to the non-convexity of both the objective and several constraints. The objective involves a difference of logarithmic terms for the legitimate and eavesdropper rates, both depending on interference and power allocation, yielding a non-convex formulation. Moreover, the CRLB constraints are nonlinear with matrix derivatives and norms, making analytical solutions intractable. 
Although constraint \eqref{eq:constraint_a} can be convexified, the coupling between communication powers \( \gamma_k^{(n)} \) and sensing/jamming power \( \zeta^{(n)} \) across subcarriers leads to a high-dimensional, non-separable problem. Techniques such as SCA, block coordinate descent, and difference-of-convex programming~\cite{liu2022integrated,nguyen2025performance,su2023sensing} provide locally optimal solutions but suffer from initialization sensitivity, slow convergence, and lack of global guarantees. Heuristic relaxations may also violate CRLB accuracy requirements and scale poorly with the number of users and subcarriers, limiting practical applicability. To address these challenges, a learning-based approach is proposed, where all constraints are embedded into the network and solved simultaneously.

\subsection{Imperfect Angle Estimation Model}
In practice, AoA estimation is constrained by noise, clutter, hardware nonlinearities, and finite antenna apertures, causing the estimated angles $(\hat{\theta}, \hat{\phi})$ to deviate from the true values $(\theta, \phi)$. We model these as noisy measurements: $\hat{\theta} = \theta + \delta_{\theta}$ and $\hat{\phi} = \phi + \delta_{\phi}$, where $\delta_{\theta} \sim \mathcal{N}(0, \sigma^2_{\theta})$ and $\delta_{\phi} \sim \mathcal{N}(0, \sigma^2_{\phi})$. The error variances depend on factors such as SNR, radar bandwidth, and antenna array size. These errors propagate through the steering vector $\mathbf{a}(\hat{\theta}, \hat{\phi})$, altering the sensing channel $\mathbf{G}(\hat{\theta}, \hat{\phi}) = \mathbf{b}(\hat{\theta}, \hat{\phi})\mathbf{a}^H(\hat{\theta}, \hat{\phi})$. Consequently, the FIM and CRLB may no longer reflect the true sensing geometry, leading to suboptimal estimation and degraded friendly jamming performance.

To address this issue, our framework explicitly accounts for angular uncertainty during training. Perturbed angle samples are drawn from the modelled distributions to generate diverse radar echo responses, which are then used as input to estimate the FIM via nonparametric $f$-divergence.

\subsection{Data-Driven Multicarrier FIM Estimation}
To address the challenges above, we adopt a nonparametric approach to estimate the FIM per subcarrier using $f$-divergence approximations. For each candidate jamming beam \( \mathbf{v}_i^{(n)} \in \mathcal{N}(\hat{\mathbf{H}}_B^{(n)}) \), radar echoes are collected for perturbed angles:
\begin{equation}
(\theta + \delta_{i1}^{(n)}, \phi + \delta_{i2}^{(n)}), \quad \forall n
\end{equation}
and a neural discriminator estimates divergence values \( d^{(n)}(\delta_i) \). Each FIM \( \mathbf{J}^{(n)} \) is reconstructed using:
\[
2 \mathbf{d}^{(n)} = \mathbf{U}^{(n)} \mathbf{f}^{(n)},
\quad
\hat{\mathbf{f}}^{(n)}_{\text{LS}} = 2 (\mathbf{U}^{(n)\top} \mathbf{U}^{(n)})^{-1} \mathbf{U}^{(n)\top} \mathbf{d}^{(n)}.
\]

To ensure physical validity, each estimated \( \mathbf{J}_\theta^{(n)} = \mathrm{mat}(\mathbf{f}^{(n)}) \) is constrained to be PSD:
\begin{subequations} \label{eq:problem34}
	\begin{IEEEeqnarray}{cl}
		\ &\underset{\mathbf{f}^{(n)}}{\mathrm{max}} \quad   \left\| 2 \mathbf{d}^{(n)} - \mathbf{U}^{(n)} \mathbf{f}^{(n)} \right\|_2^2
		\label{eq:objective} \\
		& \st\  f_k^{(n)} = \hat{f}_k^{(n),\text{LS}}, \quad \forall k
\label{eq:constraint_31a} \\[0.6ex]
& \quad\,\, \mathrm{mat}\!\left(\mathbf{f}^{(n)}\right) = \mathbf{J}_\theta^{(n)} \succeq 0.
\label{eq:constraint_31b}
	\end{IEEEeqnarray}
\end{subequations}
{\color{black}The proposed framework addresses mobility and time-varying channel conditions through uncertainty-aware training and sensing-guided adaptation rather than explicit temporal modeling. Specifically, mobility effects are captured during training by exposing the network to perturbed radar echoes and imperfect CSI realizations that reflect variations in angles of arrival, channel coefficients, and sensing noise. During operation, the friendly jamming beam is adapted based on instantaneous radar echo statistics and the associated Fisher information estimates, which are recomputed from current sensing observations. As a result, although the neural network parameters are trained offline, the jamming strategy adapts implicitly to environmental changes without assuming a specific temporal correlation model or relying on online learning or recurrent neural network architectures.}

\subsection{Quantized Tensor-Train Based Neural Compression}
In high-dimensional learning-based ISAC systems, neural networks used to learn mappings from CSI or radar features to beamforming weights often contain FC layers with millions of parameters. While effective, such models face scalability and overfitting challenges, especially when trained with imperfect or partial channel state information. To address this, we leverage tensor decomposition to compress large weight matrices in the encoder, reducing both memory footprint and training complexity without sacrificing learning capability \cite{sidiropoulos2017tensor}.

Let the input to the encoder be a vectorized tensor \( \mathbf{x} \in \mathbb{R}^{N} \), from reshaped CSI of size \( N_t \times N_s \times T \), and let the output be a vector \( \mathbf{z} \in \mathbb{R}^{M} \) used to generate precoder and FJ weights. The conventional FC-layer mapping is defined as follows:
\begin{equation}
\mathbf{z} = \mathbf{W} \mathbf{x} + \mathbf{b}, \quad \text{where } \mathbf{W} \in \mathbb{R}^{M \times N}.
\end{equation}
In the TT-based approach, the weight matrix 
$\mathbf{W} \in \mathbb{R}^{M \times N}$ is reparameterized 
as a sequence of low-rank tensor cores is given as follows:
\begin{equation}
\mathcal{W}(m_1,\ldots,m_q,n_1,\ldots,n_q) 
= \prod_{k=1}^{q} G_k[m_k,n_k]
\end{equation}
where the input and output dimensions are factorized as 
$N = \prod_{i=1}^{q} n_i$ and $M = \prod_{i=1}^{q} m_i$, 
$G_k[m_k,n_k] \in \mathbb{R}^{r_{k-1} \times r_k}$ denotes the $k$-th TT core, 
and $\{r_k\}_{k=0}^{q}$ are the TT ranks with $r_0 = r_q = 1$. 
The total number of trainable parameters is presented as follows:
\begin{equation}
\sum_{k=1}^{q} r_{k-1} m_k n_k r_k \;\;\ll\;\; MN
\end{equation}
which demonstrates the significant reduction in model complexity compared to the original dense parameterization.

In our multicarrier TT-based framework, the encoder operates on a per-subcarrier feature vector 
\begin{align}
\mathbf{z}^{(n)} = \Big[
& \mathrm{Re}\!\big(\mathrm{vec}(\hat{\mathbf{H}}_B^{(n)})\big), \;
  \mathrm{Im}\!\big(\mathrm{vec}(\hat{\mathbf{H}}_B^{(n)})\big), \notag \\[2pt]
& \gamma^{(n)}, \; \zeta^{(n)}, \; \theta, \; \phi, \; n/N
\Big]^T 
\end{align}
where $\hat{\mathbf{H}}_B^{(n)}$ is the estimated downlink CSI on subcarrier $n$, 
$\gamma^{(n)}$ and $\zeta^{(n)}$ are the communication and jamming power coefficients, 
and $(\theta,\phi)$ denote angular parameters. 
From this input, the encoder produces the user beamformers $\{\mathbf{f}_k^{(n)}\}$ and a candidate FJ beam $\mathbf{v}^{(n)}$, which are subsequently screened via Fisher-information and CRLB constraints. 
The encoder variant employed in this work is the Deep TT Encoder (DTTE), a hybrid design consisting of a compressed TT core followed by an FC, making a balance between the scalability of tensor decomposition and the expressive capability of FC networks.

{\color{black}

Quantization is applied after tensor train factorization by representing each tensor train core using reduced numerical precision. Specifically, each element $g$ in a tensor train core $\mathbf{G}_k$ is mapped to a quantized value $\tilde{g}$ according to a uniform quantization rule
\[
\tilde{g} = Q(g) = \Delta \cdot \mathrm{round}\!\left(\frac{g}{\Delta}\right),
\]
where $\Delta$ denotes the quantization step size. The resulting set of quantized tensor train cores constitutes the quantized tensor train encoder referred to in the Introduction. This quantization step is applied after offline training and does not alter the network architecture or learning objective, while significantly reducing memory footprint and inference complexity.
}

{\color{black}
In addition to model size reduction, we evaluate the computational complexity of the proposed framework in terms of floating-point operations (FLOPs) per inference. Owing to the quantized tensor-train representation, the encoder exhibits significantly reduced inference complexity compared to its uncompressed counterpart, with a fixed and predictable FLOPs count that does not depend on online interaction or iterative optimization. This lightweight inference characteristic makes the proposed approach suitable for real-time ISAC deployment under practical latency and computational constraints.
}

{\color{black}
\subsection{Hardware Impairments}

In this section, we explicitly model two primary sources of Radio Frequency (RF) chain impairments: Phase Noise (PN) and I/Q Imbalance. We model the phase noise $\phi(n)$ as a discrete Wiener process, where the received signal becomes $y(n) = e^{j\phi(n)} \mathbf{h}^H \mathbf{x}(n) + w(n)$, with $\phi(n) \sim \mathcal{N}(0, \sigma_{PN}^2)$. Similarly, we model I/Q imbalance by introducing amplitude mismatch $\epsilon$ and phase mismatch $\Delta\theta$, distorting the transmitted symbol $x$ into $\tilde{x} = \mu x + \nu x^*$, where $\mu = \cos(\Delta\theta/2) + j\epsilon \sin(\Delta\theta/2)$ and $\nu = \epsilon \cos(\Delta\theta/2) - j \sin(\Delta\theta/2)$.

We evaluate the impact of RF front-end non-idealities on the proposed secure ISAC framework. Following standard complex-baseband models, phase noise is modeled as a multiplicative distortion
\[
\mathbf{y}[n] = e^{j\phi[n]}\mathbf{H}\mathbf{x}[n] + \mathbf{w}[n],
\]
where $\phi[n]$ follows a discrete-time Wiener phase process. I/Q imbalance is modeled via a widely-linear transmit distortion
\[
\tilde{\mathbf{x}}[n] = \boldsymbol{\mu}\,\mathbf{x}[n] + \boldsymbol{\nu}\,\mathbf{x}^*[n],
\qquad
\mathbf{y}[n] = \mathbf{H}\tilde{\mathbf{x}}[n] + \mathbf{w}[n],
\]
where $\boldsymbol{\mu},\boldsymbol{\nu}$ capture amplitude and phase mismatch between the I and Q branches.
}

\subsection{Neural ISAC-Based Multicarrier FJ Training Workflow}
To enable end-to-end optimization across subcarriers, we adopt a neural autoencoder framework. The encoder maps subcarrier-wise estimated CSI \( \hat{\mathbf{H}}_B^{(n)} \) to beamforming vectors \( \{\mathbf{f}_k^{(n)}\} \), while candidate FJ beams  \( \mathbf{v}_i^{(n)} \) are evaluated based on FIM sensitivity.

For multicarrier beamforming and FJ learning, each subcarrier 
$n$ is represented by a real-valued feature vector:
\begin{align}
\mathbf{z}^{(n)} = \text{concat}\Big(
&\mathrm{Re}(\mathrm{vec}(\hat{\mathbf{H}}_B^{(n)})), 
\mathrm{Im}(\mathrm{vec}(\hat{\mathbf{H}}_B^{(n)})), \notag\\
&\gamma^{(n)}, \zeta^{(n)}, \theta, \phi, n/N
\Big)
\end{align}
capturing both communication and sensing context. The neural encoder processes the full feature set \( \mathbf{Z} \in \mathbb{R}^{N \times D} \) to jointly generate user beamformers \( \{\mathbf{f}_k^{(n)}\} \) and FJ directions \( \{\mathbf{v}^{(n)}\} \). Radar echoes are then used to estimate the Fisher Information and CRLBs, guiding loss minimization under power and angular accuracy constraints.

The training loss aggregates communication throughput and FIM estimation quality over all subcarriers:
\begin{equation}
\mathcal{L}_{\text{total}} = -\sum_{n=1}^N \sum_{k=1}^K R_k^{(n)} + \lambda \sum_{n=1}^N \left( \text{CRLB}_\theta^{(n)} + \text{CRLB}_\phi^{(n)} \right).
\end{equation}

In the post-training phase, the BS selects \( \mathbf{v}^{(n)} \) per subcarrier that:
$i)$ lies in \( \mathcal{N}(\hat{\mathbf{H}}_B^{(n)}) \),
$ii)$ satisfies angular estimation constraints, and
$iii)$ maximizes subcarrier FIM trace \( \mathrm{Tr}(\mathbf{J}^{(n)}) \).
This multicarrier-aware jamming strategy leverages spatial diversity and frequency selectivity to jointly enhance estimation accuracy and communication secrecy.

\section{Friendly Jamming for Non-Overlapping Carrier ISAC}
{\color{black}
Section \ref{sec:multicarrier} considers an overlapping multicarrier ISAC framework in which all subcarriers are jointly used for communication, sensing, and friendly jamming, enabling full spectral reuse and flexible jamming design under CRLB-based sensing constraints. In contrast, Section IV investigates a non-overlapping multicarrier ISAC architecture as a structured extension motivated by practical interference management requirements.

In the non-overlapping design, subcarriers are partitioned into disjoint sets dedicated to communication and sensing and jamming, respectively. While this separation reduces mutual interference, it significantly limits the spectral degrees of freedom available for friendly jamming and alters the trade-off between secrecy enhancement and sensing accuracy. 
}
\label{sec:nonoverlap}
\subsection{FJ for Non-Overlap Multicarrier ISAC System Model}
In this design, the available subcarriers are partitioned into two disjoint sets: communication-dedicated subcarriers $\mathcal{N}_c$ and sensing-and-jamming subcarriers $\mathcal{N}_s$, such that $\mathcal{N}_c \cap \mathcal{N}_s = \emptyset$ and $\mathcal{N}_c \cup \mathcal{N}_s = \{1, 2, \ldots, N\}$. This separation ensures that mutual interference between information-bearing and jamming signals is inherently avoided across subcarriers.
For each communication subcarrier $n \in \mathcal{N}_c$, the achievable rate of user $k$ treating multi-user interference and jamming as noise and the information leakage rate to Eve are given in \eqref{Rates_FJ_Non}.
\begin{figure*}[!t]
\begin{equation}
\label{Rates_FJ_Non}
\begin{split}
R_k^{(n)} &= \bar{\tau} \log_2 \Bigg( 
1 + \frac{
\gamma_k^{(n)} \left| (\mathbf{h}_k^{(n)})^H \mathbf{f}_k^{(n)} \right|^2
}{
\sum\limits_{j \ne k} \gamma_j^{(n)} \left| (\mathbf{h}_k^{(n)})^H \mathbf{f}_j^{(n)} \right|^2 
+ \zeta^{(n)} \left| (\mathbf{h}_k^{(n)})^H \mathbf{v}^{(n)} \right|^2 
+ \sigma_c^2
}\Bigg), \\
R_{E,k}^{(n)} &=\bar{\tau} \log_2 \det \Bigg( 
\mathbf{I} + 
\Big( \zeta^{(n)} \mathbf{H}_E^{(n)H} \mathbf{v}^{(n)} \mathbf{v}^{(n)H} \mathbf{H}_E^{(n)} 
+ \sigma_E^2 \mathbf{I} \Big)^{-1}
\mathbf{H}_E^{(n)H} \mathbf{f}_k^{(n)} \mathbf{f}_k^{(n)H} \mathbf{H}_E^{(n)}
\Bigg).
\end{split}
\end{equation}
\hrulefill
\vspace*{-3mm}
\end{figure*}
The denominator captures the cumulative impact of inter-user interference and jamming leakage on subcarrier $n$.
The per-subcarrier secrecy rate for user $k$ on subcarrier $n \in \mathcal{N}_c$ is given by the positive difference:
\begin{align}
R_{\mathrm{sec},k}^{(n)} = \left[ R_k^{(n)} - R_{E,k}^{(n)} \right]^+.
\end{align}

\subsection{Problem Formulation of FJ for Non-overlap Multi-carrier ISAC}
The goal is to jointly optimize the power allocation across communication and sensing subcarriers to maximize the secure communication throughput while maintaining angular estimation performance. The optimization problem is defined as follows:
\begin{subequations} \label{eq:problem43}
	\begin{IEEEeqnarray}{cl}
		\ &\underset{\{\gamma_k^{(n)}\}, \{\zeta^{(n)}\}}{\mathrm{max}} \quad   \sum_{n=1}^{N} \sum_{k=1}^{K}
\left[ R_k^{(n)} - R_{E,k}^{(n)} \right]^+
		\label{eq:objective_39} \\
		& \st\  \sum_{n \in \mathcal{N}_c} \sum_{k=1}^{K} \gamma_k^{(n)} \xi_k^{(n)}
  + \sum_{n \in \mathcal{N}_s} \zeta^{(n)} \leq P_{\max}
\label{eq:constraint_39a} \\
&\quad\,\ \sum_{n \in \mathcal{N}_s} \mathrm{CRLB}_\theta^{(n)}
  \leq \mathrm{CRLB}_\theta^{\max}
\label{eq:constraint_39b} \\
&\quad\,\ \sum_{n \in \mathcal{N}_s} \mathrm{CRLB}_\phi^{(n)}
  \leq \mathrm{CRLB}_\phi^{\max}.
\label{eq:constraint_39c} 
	\end{IEEEeqnarray}
\end{subequations}
For each $n \in \mathcal{N}_s$, the Fisher information associated with angular parameters $\omega \in \{\theta, \phi\}$ is given by
\begin{align}
J_\omega^{(n)} = \frac{2 \zeta^{(n)}}{\sigma_s^2} 
\left\| \frac{\partial \mathbf{G}(\theta, \phi) \mathbf{v}^{(n)}}{\partial \omega} \right\|^2
\end{align}
which determines the per-subcarrier CRLB via $\mathrm{CRLB}_\omega^{(n)} = 1 / J_\omega^{(n)}$. These angular CRLBs serve as performance constraints for sensing fidelity in (C2') and (C3').

To address the above problem, the training and estimation procedures, along with the beamforming architecture for multicarrier ISAC, follow the framework outlined in Section \ref{sec:multicarrier}. The use of non-overlapping subcarrier assignment simplifies interference management by decoupling communication and jamming functions. Although this partitioning reduces the number of subcarriers available for communication, it enhances robustness under imperfect CSI by isolating jamming and radar transmissions from user data. This separation enables adaptive beam design that improves both security and sensing accuracy.


\begin{table}[ht]
\centering
\caption{Simulation Parameters}
\label{tab:Sim_Setup}
\renewcommand{\arraystretch}{1.3}
\begin{tabular}{l|l}
\hline
\textbf{Parameter} & \textbf{Value} \\ \hline\hline
$N_t$, $N_r$, $N_e$ & 16, 4, 2 \\ \hline
Number of users, $K$ & 2 \\ \hline
Number of subcarriers, $N$ & 64 \\ \hline
Set of comm subcarriers, $\mathcal{N}_c$ & First 32 subcarriers (1--32) \\ \hline
Set of sensing/jamming, $\mathcal{N}_s$ & Remaining 32 subcarriers (33--64) \\ \hline
Modulation scheme & QPSK ($M = 4$) \\ \hline
Channel model & i.i.d. Rayleigh fading \\ \hline
CSI error variance & $\rho \in \{0, 0.05, 0.1, 0.2\}$ \\ \hline
Noise power & $\sigma_n^2 = 1$ (normalized) \\ \hline
Transmit power budget & $P_{\max} = 30$ dB \\ \hline
CRLB constraint & $\text{CRLB}_\theta, \text{CRLB}_\phi \leq -30$ dB \\ \hline
Sensing target angles & $(\theta, \phi) = (10^\circ, 15^\circ)$ \\ \hline
Optimizer & Adam \\ \hline
Learning rate and batch size & $10^{-3}$ and $128$ \\ \hline
Autoencoder Encoder & \makecell{FC ($N_t \rightarrow 128$) $\rightarrow$ ReLU \\ $\rightarrow$ FC ($128 \rightarrow N_t$) $\rightarrow$ BatchNorm} \\ \hline
Autoencoder Decoder & \makecell{FC ($1 \rightarrow 128$) $\rightarrow$ ReLU \\ $\rightarrow$ FC ($128 \rightarrow M$) $\rightarrow$ Softmax} \\ \hline
FIM estimator & \makecell{FC (echo dim $\rightarrow 128$) $\rightarrow$ ReLU \\ $\rightarrow$ FC ($128 \rightarrow 1$)} \\ \hline
Loss function (comm) & Cross-Entropy (symbol classification) \\ \hline
Loss function (sensing) & $f$-divergence via Donsker--Varadhan bound \\ \hline
\end{tabular}
\end{table}

\section{Numerical Results}
\label{sec:Numerical_Results}
{\color{black}
The dataset used in this work is synthetically generated based on the physical models described in Sections~II and III. The communication channels follow a flat Rayleigh fading model with additive white Gaussian noise, where the entries of the legitimate and eavesdropper channel matrices satisfy $[\mathbf{H}_B]_{i,j}, [\mathbf{H}_E]_{i,j} \sim \mathcal{CN}(0,1)$. The imperfect CSI is modeled by adding Gaussian estimation noise to the legitimate channels, with variances $\rho \in \{0, 0.05, 0.1, 0.2\}$. Radar echo data are generated by perturbing the azimuth and elevation angles. Specifically, for each probing beam, the perturbations $\boldsymbol{\delta}_i \sim \mathcal{N}(0, 0.05\,\mathbf{I}_3)$ are identically applied to both angular parameters, yielding perturbed estimates $(\theta + \delta_{i1}, \phi + \delta_{i2})$. Since the dimension of the estimated parameter vector is two, the perturbation dimension is set to $D=4$. For each perturbation pair, $5000$ samples are generated to train the nonparametric Fisher information matrix estimator.

The training process follows a two-stage neural autoencoder framework implemented using the Adam optimizer with a learning rate of $10^{-3}$ and a batch size of $128$. In Stage~1, the communication encoder is optimized by minimizing a combination of categorical cross-entropy and rate loss to learn the communication beamformers. In Stage~2, the friendly jamming beams are optimized by maximizing angular sensitivity through the estimated Fisher information matrix while strictly enforcing Cramér--Rao lower bound constraints.
The proposed ISAC-friendly jamming framework is evaluated against the baseline methods in \cite{liu2024artificial} and \cite{tuan2025securing} in terms of the sum secrecy rate. The main simulation parameters are summarized in Table~\ref{tab:Sim_Setup}.
}

\begin{figure}[!htb]
\centering
\includegraphics[width=1\linewidth]{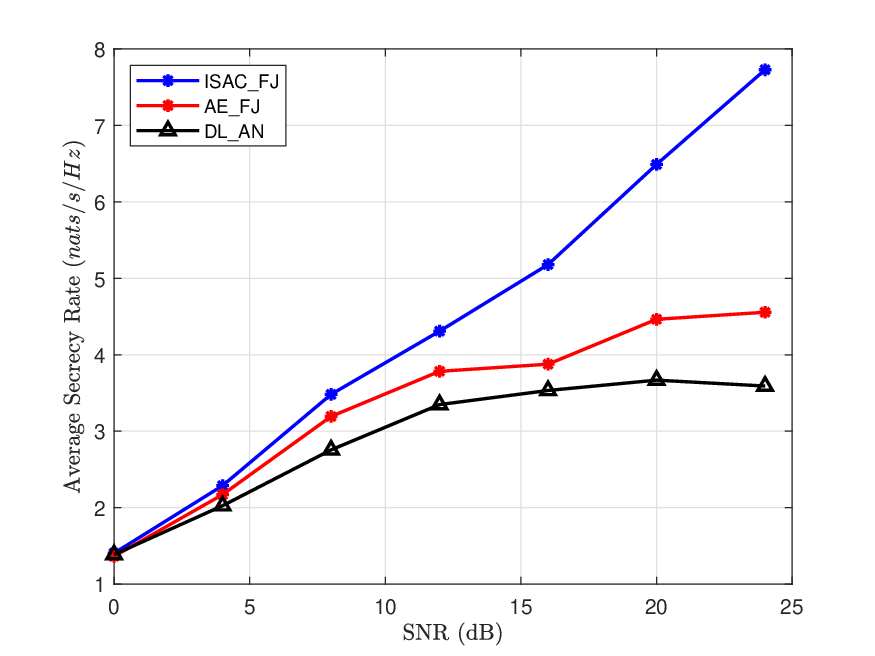}
\caption{Sum secrecy rate comparison with perfect AoA estimation.}
\label{fig:Im_CSI_COMPARE_ISAC}
\end{figure}

Fig.~\ref{fig:Im_CSI_COMPARE_ISAC} shows that the proposed ISAC\_FJ scheme significantly outperforms AE\_FJ and DL\_AN in terms of average secrecy rate across all power levels with the assumption of perfect AoA estimation. For instance, at $\text{SNR} = 20$ dB, ISAC\_FJ achieves approximately 6.9 nats/s/Hz, while AE\_FJ and DL\_AN attain only 4.5 and 3.6 nats/s/Hz, respectively, reflecting relative gains of over $50\%$ and $90\%$.
This improvement is due to ISAC\_FJ's sensing-guided beam design, which focuses jamming power in directions most likely aligned with Eve, rather than spreading it over the entire null space. AE\_FJ and DL\_AN distribute jamming more broadly, reducing power efficiency and limiting the secrecy rate.
This demonstrates a clear trade-off: wide-direction jamming improves coverage but reduces the power available for secure communication. In contrast, ISAC\_FJ reduces jamming dimensionality and improves power utilization, yielding higher secrecy performance, particularly in high SNR regimes.

\begin{figure}[!htb]
\centering
\includegraphics[width=1\linewidth]{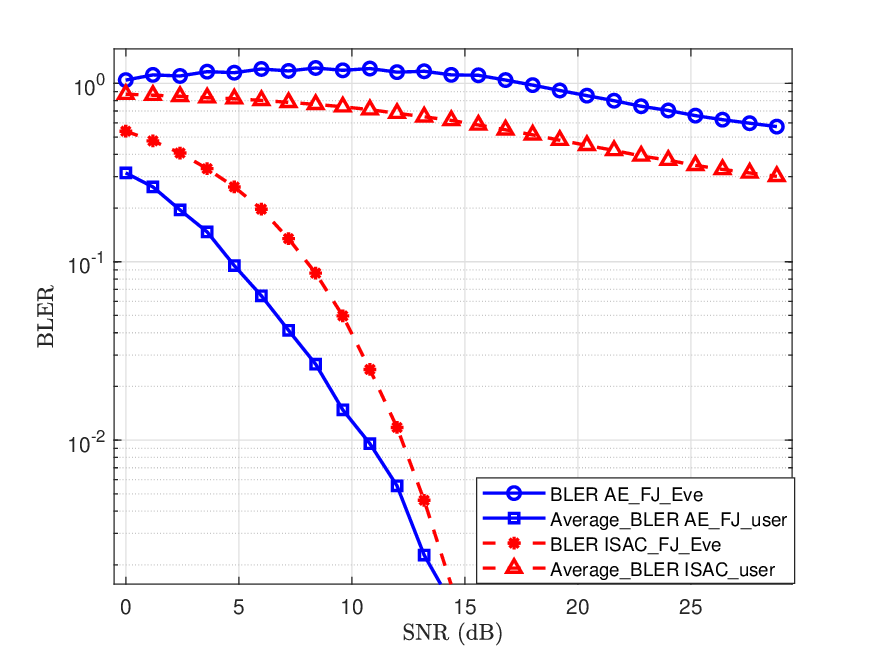}
\caption{BLER on the legitimate channel.}
\label{fig:BLER_Compare}
\end{figure}
The transmission reliability for legitimate users and the decoding performance of Eve, assuming Eve employs an MMSE decoder, are shown in Fig.~\ref{fig:BLER_Compare}. The figure presents the average BLER at the three users and at Eve for both ISAC\_FJ and the AE\_FJ baseline. While AE\_FJ achieves slightly lower BLER for legitimate users, ISAC\_FJ still ensures reliable communication, maintaining BLER below $10^{-2}$ at $\text{SNR} = 15$ dB. More importantly, ISAC\_FJ significantly degrades Eve’s decoding ability, keeping her BLER near 1 across all power levels. In contrast, AE\_FJ allows Eve's BLER to decline as transmit power increases. These results demonstrate that ISAC\_FJ effectively balances reliability and security by focusing jamming power in threat-relevant directions.

\begin{figure}[!htb]
\centering
\includegraphics[width=1\linewidth]{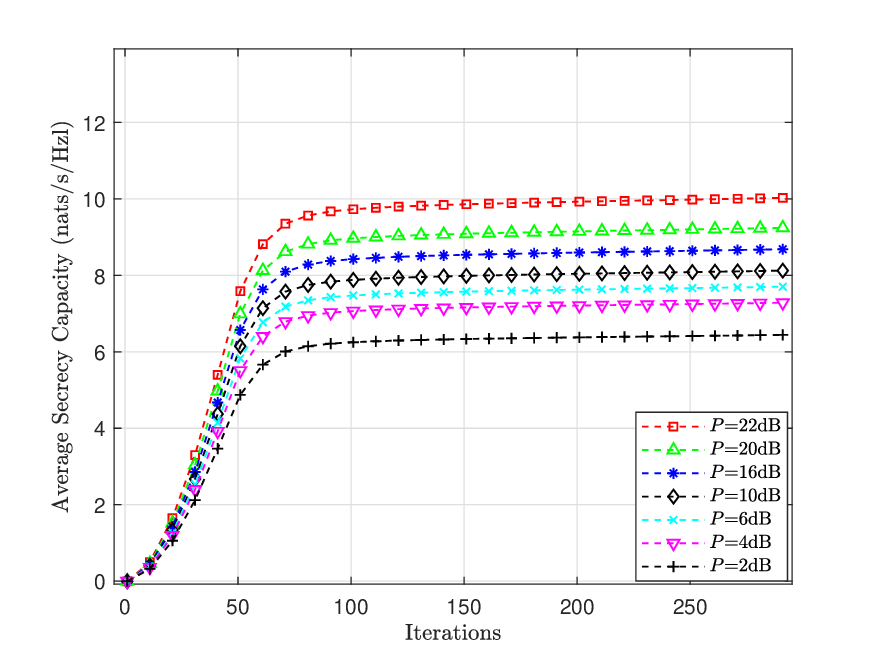}
\caption{Convergence of training process.}
\label{fig:Converge_Av_sec}
\end{figure}
Fig.~\ref{fig:Converge_Av_sec} illustrates the convergence behavior of the proposed secrecy optimization. The method shows rapid convergence within the first $50$ iterations, followed by stable performance. As expected, higher transmit power budgets yield higher sum secrecy rates, with the sum secrecy rate saturating around $10$ Nats/symbol at $P = 22$ dB. The flat curves beyond iteration $100$ confirm the stability and convergence of the end-to-end learning framework across different power levels.

\begin{figure}[!htb]
\centering
\includegraphics[width=1\linewidth]{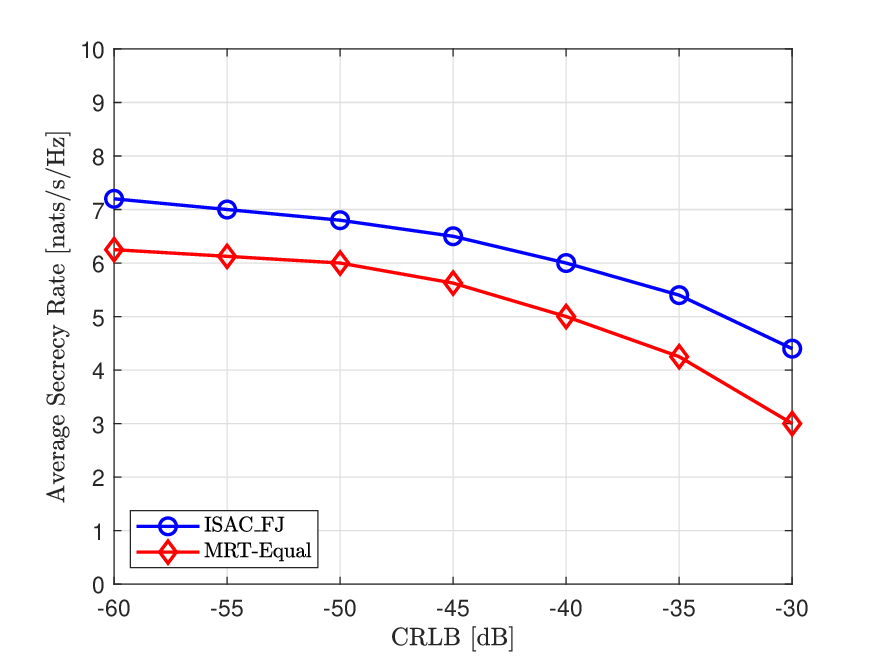}
\caption{Average secrecy rate versus CRLB.}
\label{fig:Sec_CRLB}
\end{figure}

Fig.~\ref{fig:Sec_CRLB} illustrates the relationship between the sum secrecy rate and the CRLB, defined as the mean of $\text{CRLB}_{\theta}$ and $\text{CRLB}_{\phi}$.  The proposed ISAC\_FJ method consistently outperforms the MRT-Equal baseline from \cite{nguyen2025performance}, which uses a closed-form steering vector $\textbf{G}(\theta, \phi)$ across all CRLB levels. As CRLB values decrease, both methods achieve higher secrecy rates. However, as CRLB increases toward $-30$ dB, indicating degraded sensing accuracy, MRT-Equal’s performance drops more sharply. ISAC\_FJ remains robust, maintaining a secrecy rate above $4$ nats/s/Hz, while MRT-Equal falls to around $3$ nats/s/Hz. This demonstrates ISAC\_FJ’s resilience to sensing errors and its advantage in imperfect conditions. Importantly, ISAC\_FJ achieves this without relying on a closed-form estimation function, leveraging data-driven end-to-end learning guided by CRLB constraints instead.


To evaluate the advantage of multicarrier design, we compare the proposed ISAC-FJ scheme in a single-carrier and a multicarrier setting under the same total transmit power constraint \( P_{\max} = 30 \). In the single-carrier scheme, all power is allocated to a single frequency slot, while the power allocation is distributed across a certain subcarriers in the multicarrier scheme.  
Fig. \ref{fig:Worstcase_5curves} evaluates the average worst-user secrecy rate as a function of SNR and the number of subcarriers $N$. The results demonstrate the superiority of the multicarrier approach. The single-carrier system ($N=1$) achieves a secrecy rate of nearly zero across all SNRs, indicating its inability to secure the channel.
Performance scales directly with the number of subcarriers. At an SNR of 25 dB, the secrecy rate increases from approximately 1.6 nats/s/Hz for $N=16$ to 4.3 nats/s/Hz for $N=32$. The advantage is even more pronounced for $N=64$, which achieves approximately 6.2 nats/s/Hz at the same SNR. This substantial improvement is attributed to the model's ability to leverage frequency diversity, flexibly allocating power across subcarriers to exploit channel advantages. The strong, unsaturated growth of the $N=64$ curve, reaching over 7 nats/s/Hz at 30 dB, confirms the scalability and effectiveness of the proposed framework for wideband secure ISAC.

\begin{figure}[!htb]
\centering
\includegraphics[width=1\linewidth]{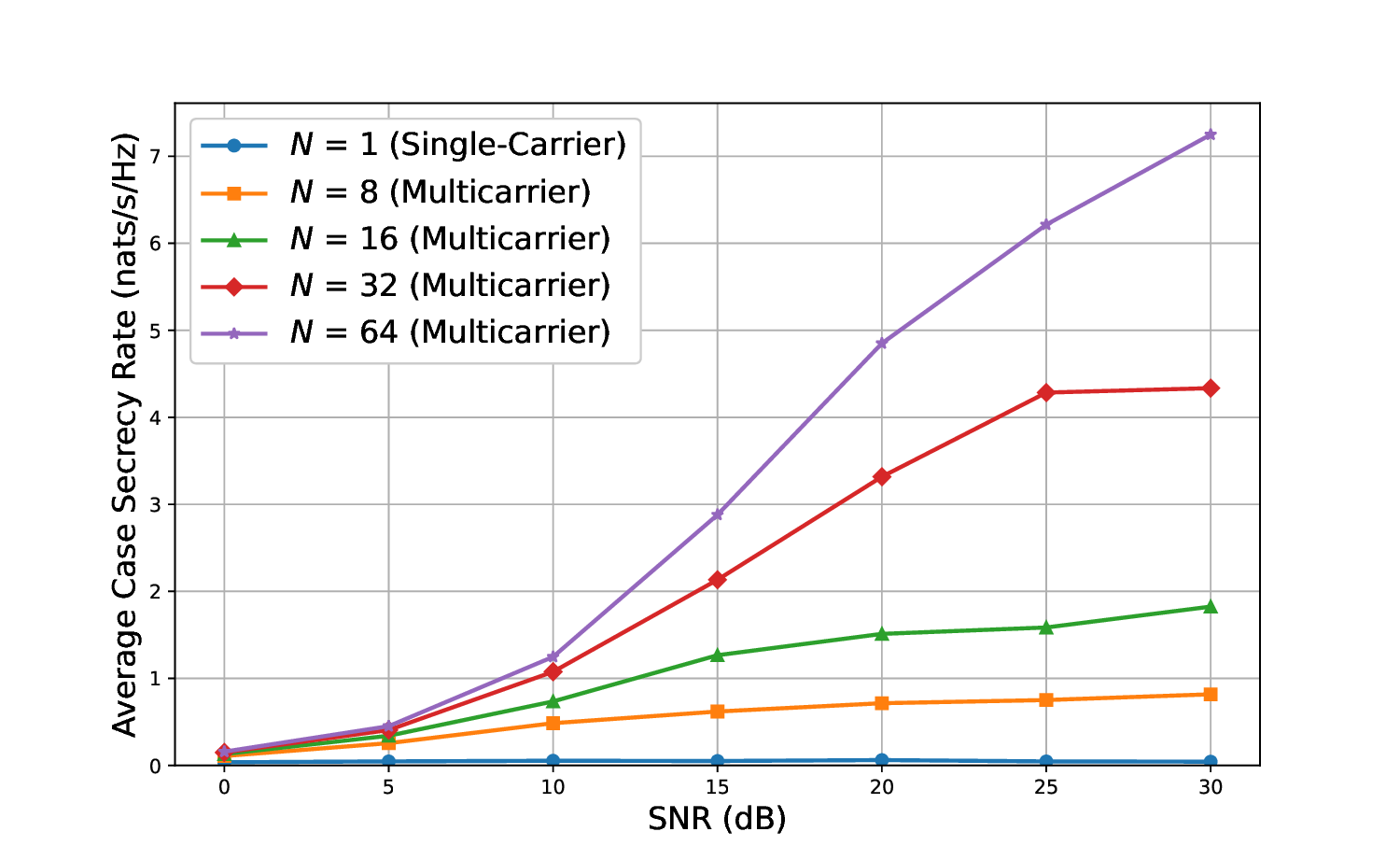}
\caption{Average secrecy rate in worst case per user's channel.}
\label{fig:Worstcase_5curves}
\end{figure}

We investigate the impact of channel estimation errors on the secrecy rate. Fig.~\ref{fig:Sec__SNR_pertub} shows the sum secrecy rate versus different levels of CSI error variance $\rho \in \{0, 0.05, 0.1, 0.2\}$. As $\rho$ increases, the secrecy rate degrades due to uncertainty in the legitimate channel. However, the degradation is gradual, indicating that the proposed model is resilient to moderate CSI errors, unlike traditional AN-based methods, which fail under imperfect CSI.

\begin{figure}[!htb]
\centering
\includegraphics[width=1\linewidth]{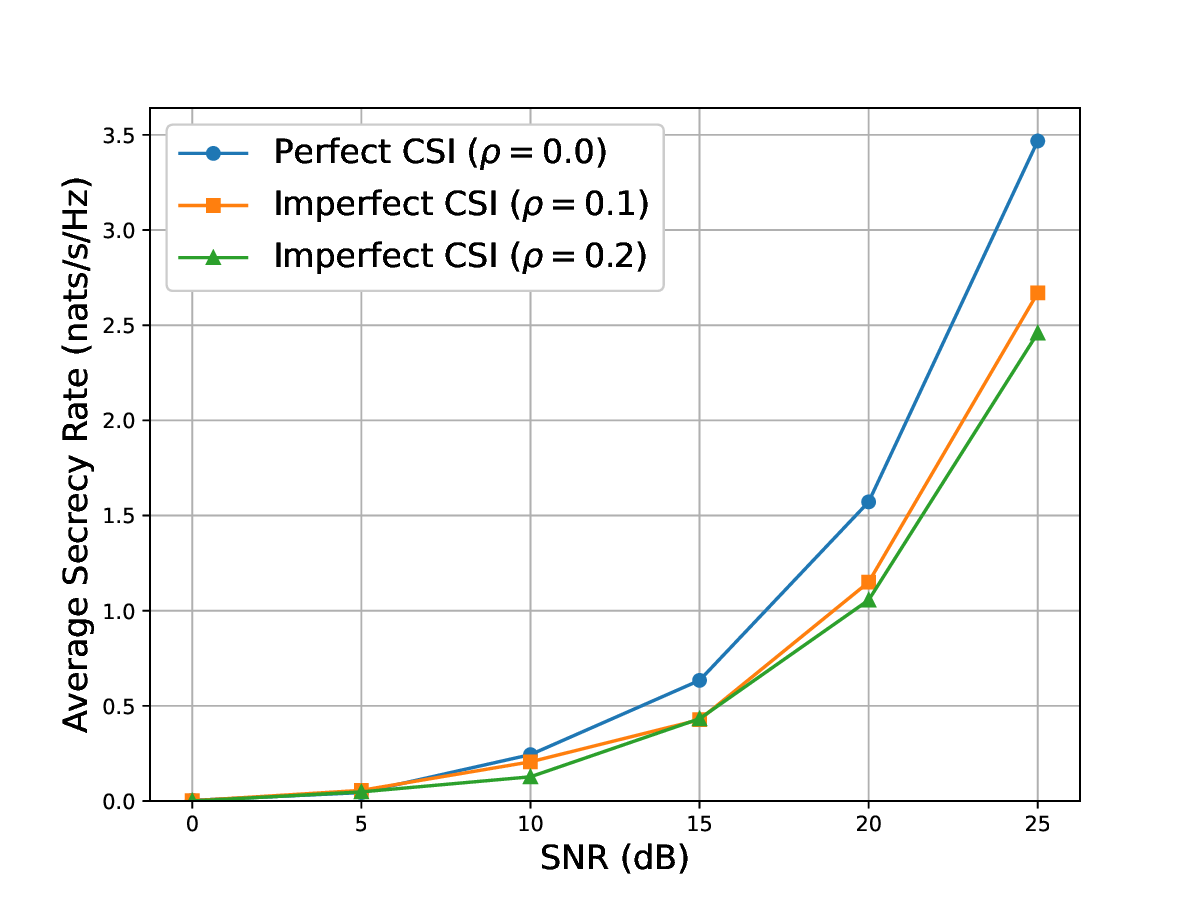}
\caption{Average secrecy rate versus SNR with different perturbation levels of estimation.}
\label{fig:Sec__SNR_pertub}
\end{figure}

Fig.~\ref{fig: Performance_Comparison} illustrates the performance–efficiency trade-off among the FC encoder, the pure TT encoder, and the proposed Hybrid TT-FC encoder. The pure TT model, implemented with a single layer under aggressive compression (low TT-rank), achieves a parameter reduction of over two orders of magnitude; however, this comes at the cost of a secrecy rate degradation exceeding $10\%$, attributable to its limited learning capacity.
In contrast, the Hybrid TT-FC encoder addresses this limitation by combining a compressed TT core with two standard FC layers as the output head. The TT core efficiently processes the high-dimensional channel input using low TT rank tensors, maintaining parameter efficiency. The FC head compensates for the lost capacity by enhancing nonlinearity and representational power. This hybrid architecture restores performance: at 30 dB SNR, it achieves over $99\%$ of the secrecy rate of the FC baseline with less than $1\%$ loss, while still being significantly smaller in parameter count.
Training convergence is shown in Fig.~\ref{fig:Loss_Convergence}. The pure TT model converges slowly due to under-parameterization. In contrast, the Hybrid TT-FC model, trained with 80 epochs, demonstrates stable and faster convergence, benefiting from the added flexibility of the FC head.

\begin{figure}[!htb]
\centering
\includegraphics[width=1\linewidth]{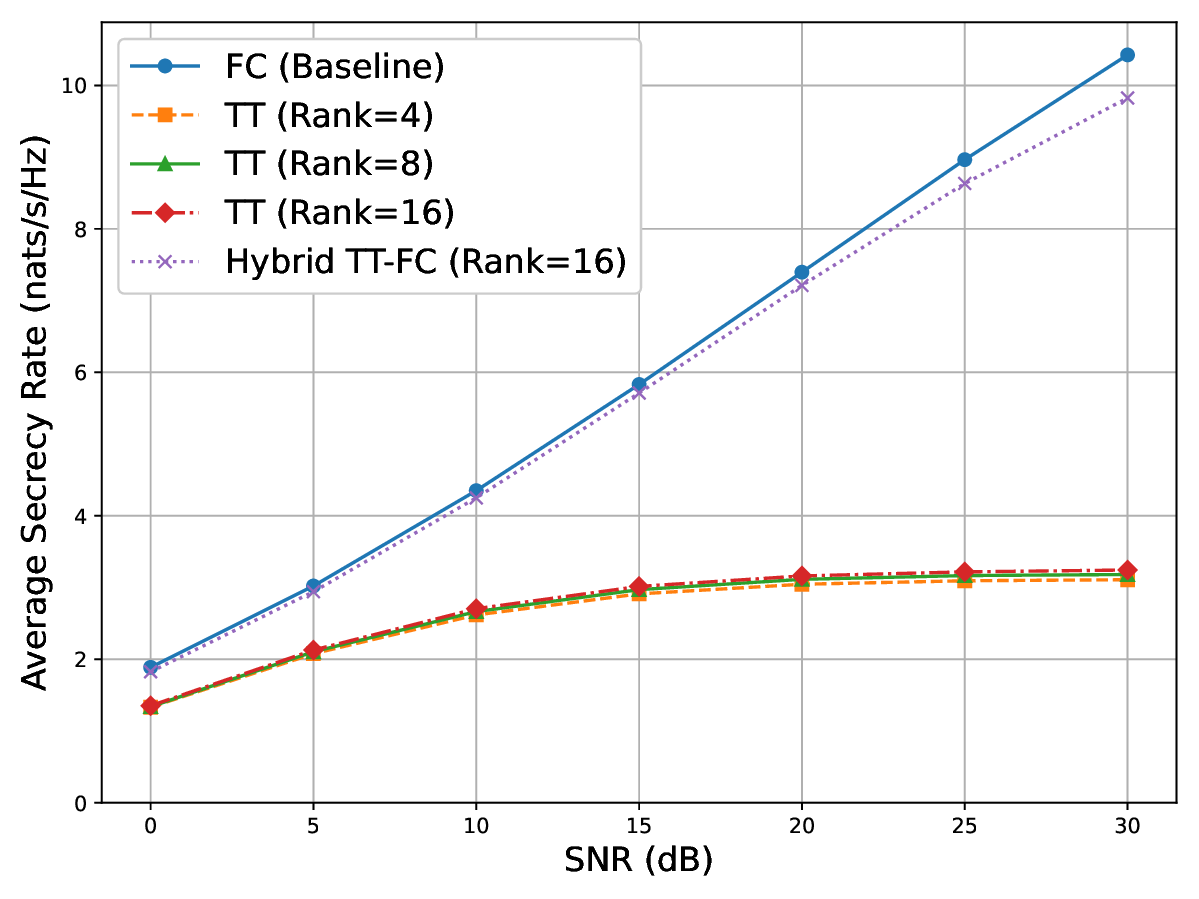}
\caption{Comprehensive comparison of the average secrecy rates between FC encoder and quantised TT.}
\label{fig: Performance_Comparison}
\end{figure}

\begin{figure}[!htb]
\centering
\includegraphics[width=1\linewidth]{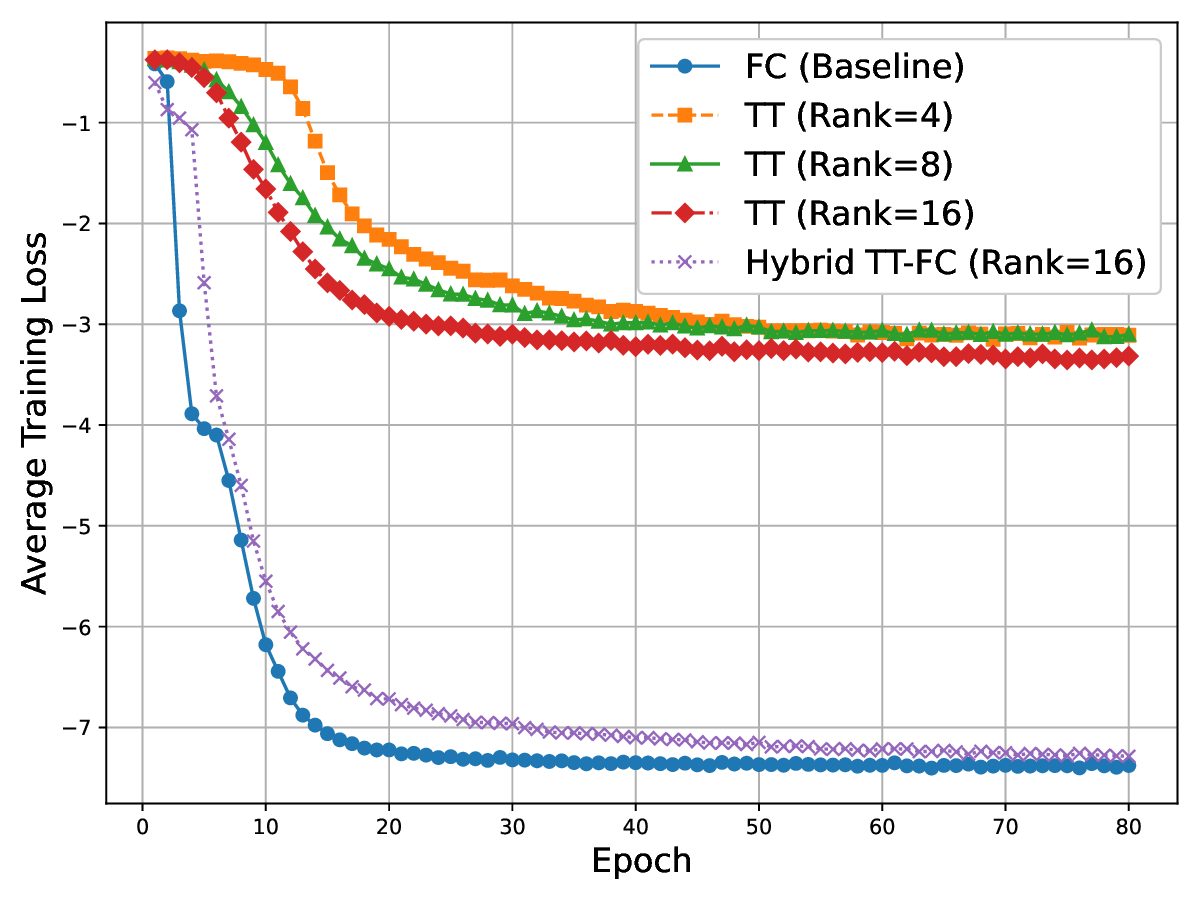}
\caption{Comparison of average secrecy rates and training loss between FC and TT method.}
\label{fig:Loss_Convergence}
\end{figure}

{\color{black}
\begin{table*}[!ht]
\label{tab:architecture_comparison}
\centering
\caption*{TABLE III: Comparison of Encoder Architectures: FC, DTTE, and Pure TT}
\small
\resizebox{\textwidth}{!}{%
\begin{tabular}{|l|c|c|c|}
\hline
\textbf{Metric} & \textbf{FC Encoder} & \textbf{DTTE} & \textbf{Pure TT Encoder} \\
\hline
\textbf{Architecture} & Multi-layer FC network & TT core + FC head & Single TT-matrix layer \\
\hline
\textbf{Compression} & None & Hybrid TT + FC & Full TT decomposition \\
\hline
\textbf{Parameters} ($N_t{=}16$, $K{=}2$) & 62,563 & \textbf{28,371} & 2,800 \\
\hline
\textcolor{black}{\textbf{FLOPs (Inference)}} & \textcolor{black}{$\approx$ \textbf{125,000}} & \textcolor{black}{$\approx$ \textbf{57,000}} & \textcolor{black}{$\approx$ \textbf{5,600}} \\
\hline
\textbf{Secrecy Rate} & Highest (baseline) & Near-baseline & Lower due to overcompression \\
\hline
\textbf{Inference Time} & Slowest & Moderate & Fastest \\
\hline
\textbf{Trade-off} & Best performance; high cost & Balanced performance and size & High compression; degraded accuracy \\
\hline
\end{tabular}%
}
\end{table*}
}

Fig.~\ref{fig:Over_lap_channel_sec_rate} compares the worst-user channel and secrecy rates for different hybrid subcarrier allocations, revealing a significant trade-off between throughput and security. While dedicating more subcarriers to communication boosts the channel rate, it can degrade the secrecy rate.
At an SNR of $30$ dB, the $50\%$ Comm-Only scheme attains the highest channel rate 
($\approx 33$ nats/s/Hz) but suffers from the lowest secrecy rate 
($\approx 4.8$ nats/s/Hz). The optimal configuration is achieved with the 
$25\%$ Comm-Only scheme, which yields the highest secrecy rate. This outperforms both the fully overlapping scheme ($0\%$), which reaches only $7.2$ nats/s/Hz due to the presence of friendly jamming on all $64$ subcarriers that imposes a ceiling on secrecy performance, and the $50\%$ scheme. The degradation in the $50\%$ allocation arises because it must allocate stronger jamming power to fewer joint carriers to satisfy the sensing CRLB constraints, thereby exacerbating information leakage. By contrast, the $25\%$ scheme strikes the most effective balance between preserving jam-free communication carriers.

\begin{figure}[!htb]
\centering
\includegraphics[width=1.1\linewidth]{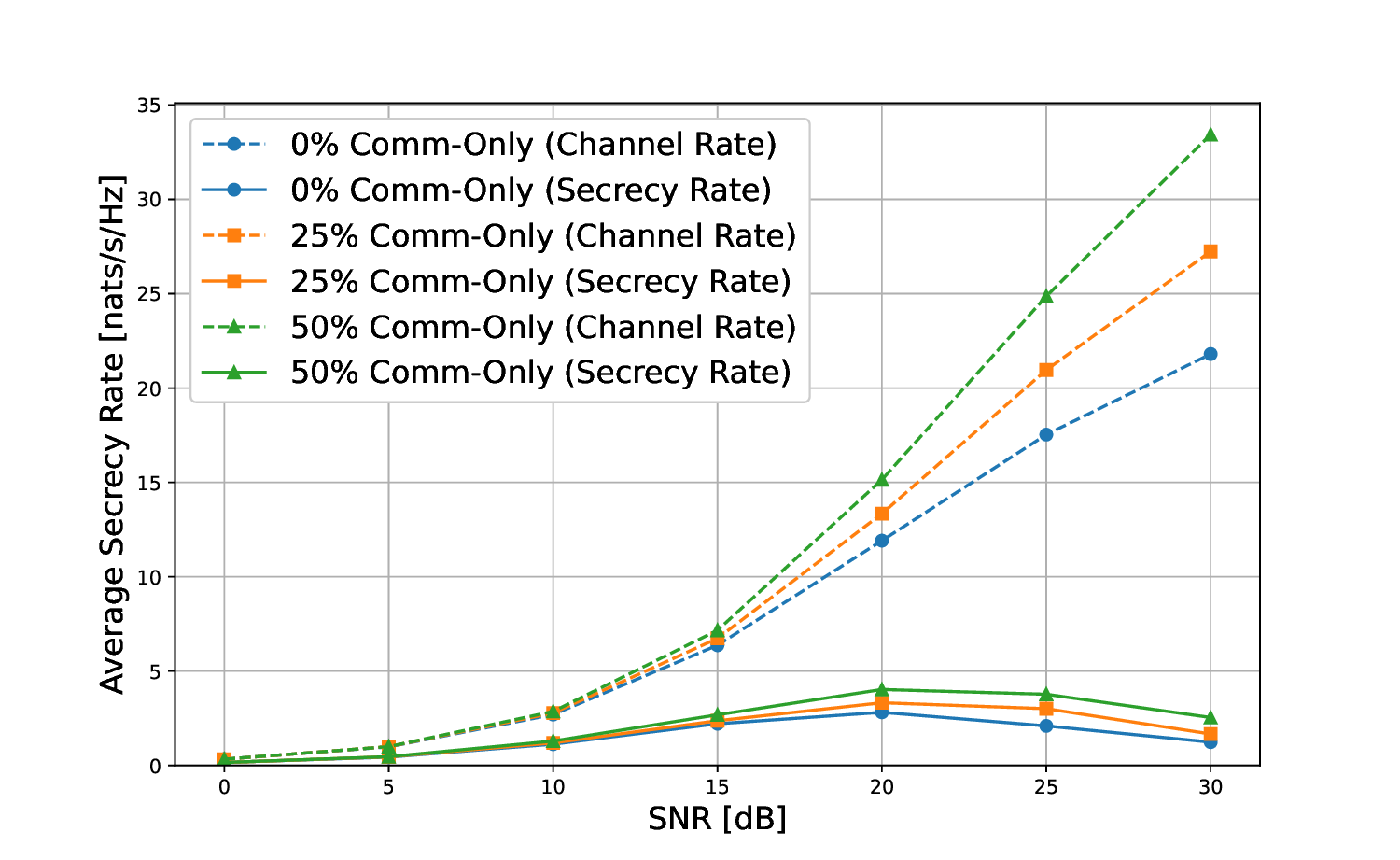}
\caption{Average secrecy rate versus SNR in different levels of non-overlapping.}
\label{fig:Over_lap_channel_sec_rate}
\end{figure}

{\color{black}
\begin{figure}[!htb]
\centering
\includegraphics[width=.8\linewidth]{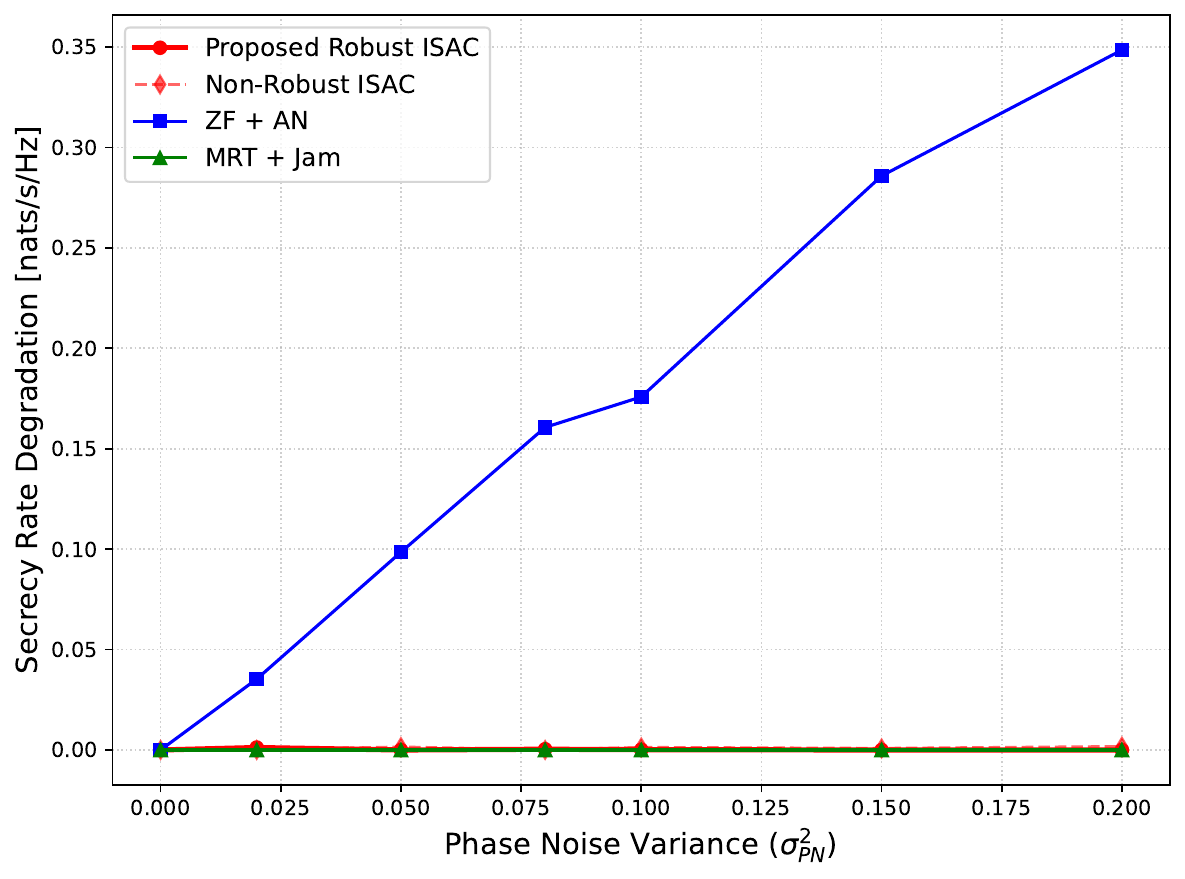}
\caption*{{\color{black}
Fig. 12: Secrecy rate degradation, defined as the loss relative to the ideal hardware baseline.}}
\label{fig:Figure_Secrecy_Rate_Degradation2}
\end{figure}

Fig. 12 illustrates the secrecy-rate degradation, defined as the performance loss relative to the ideal hardware baseline, under amplitude mismatch $\epsilon=0.02$, phase error $\Delta\theta=1^{\circ}$, and increasing phase noise variance $\sigma_{\text{PN}}^2 \in [0, 0.2]$. The proposed Robust ISAC scheme (red solid line) exhibits remarkable stability, with negligible degradation ($\Delta R \approx 0$) even under severe phase noise, confirming that the data-augmentation–based training effectively regularizes the encoder against hardware mismatches. In contrast, the ZF + AN baseline suffers substantial degradation due to the breakdown of the strict null-space orthogonality required for jamming under phase jitter. Similarly, the Non-Robust ISAC model deteriorates significantly, underscoring the necessity of the proposed robust training strategy for practical deployment.
}

{\color{black}
To quantify computational efficiency, we compare the proposed TT-based encoder with its fully connected (FC) counterpart in Table II. For $N_t = 16$, the FC encoder contains approximately 62,563 parameters due to dense connectivity, whereas the proposed DTTE architecture replaces large weight matrices with compact tensor cores, reducing the parameter count to 28,371 (a $2.2\times$ compression) while maintaining near-baseline secrecy performance. In terms of inference complexity, the FC encoder requires approximately 125,000 FLOPs per inference, while the DTTE reduces this cost to about 57,000 FLOPs, corresponding to a reduction of nearly $55\%$. Real-time feasibility is further demonstrated by execution-time measurements on an NVIDIA A100 GPU using the Tensorly library \cite{kossaifi2019tensorly}, where the TT-based autoencoder completes an epoch in 0.6 seconds compared to 2.4 seconds for the FC model. These results highlight the proposed encoder’s strong advantages in memory efficiency, computational complexity, and runtime, making it well suited for low-latency and edge-deployable ISAC systems.
}

{\color{black}
The $100$-fold reduction in model size is achieved by replacing dense weight matrices with a quantized tensor-train representation, where high-dimensional mappings are factorized into low-rank tensor cores with rank $8$. This design reduces the inference complexity from approximately $125,000$ floating-point operations to $57,000$ floating-point operations and shortens the training time per epoch from $2.4$ seconds to $0.6$ seconds. Consequently, the proposed encoder improves both training efficiency and real-time inference performance while maintaining near-baseline secrecy performance.
}

\section{Conclusion}
\label{sec:conclusion}
In this work, we examined the security challenges of multicarrier ISAC systems under imperfect CSI and unknown eavesdropper locations. We proposed a deep learning–driven framework that departs from conventional friendly jamming approaches requiring Eve’s CSI or precise AoA, instead exploiting radar echo feedback to guide directional jamming without explicit knowledge of the adversary. The framework jointly optimises beamforming and jamming through a nonparametric FIM estimator based on $f$-divergence, ensuring CRLB constraints are met even with noisy AoA estimates.

To enable effective operation, a TT-Q encoder was introduced, compressing the model by more than two orders of magnitude while maintaining accuracy. The design supports both overlapping and non-overlapping OFDM subcarrier allocations, offering flexible spectrum sharing between sensing and communication. Simulation results showed that the proposed solution delivers significant secrecy rate improvements, reduced block error rates, and strong resilience to CSI and angular estimation errors. These findings demonstrated the practicality and scalability of learning-based secure ISAC, paving the way for adoption in next-generation wireless networks.

 \bibliographystyle{IEEEtran}
	\bibliography{ref_ISAC, ref_Journal, ref_DDPG}


\end{document}